\newcommand\floor[1]{\lfloor#1\rfloor}
\newcommand\ceil[1]{\lceil#1\rceil}
\newcommand{\mT}{\mathcal{T}}
\newcommand{\mL}{\mathcal{L}}
\newcommand{\mC}{\mathcal{C}}
\newcommand{\mR}{\mathcal{R}}
\newcommand{\Prob}{\mathbb{P}}
\newcommand{\E}{\mathbb{E}}
\newcommand{\indfunc}{\mathbb{I}}
\newtheorem{theorem}{Theorem}
\newtheorem{corollary}{Corollary}
\newtheorem{lemma}{Lemma}
\newtheorem{assu}{Assumption}
\newtheorem{example}{Example}
\title{\LARGE{\textbf{Nearly Optimal Algorithms for Piecewise-Stationary Cascading Bandits}}}
\author{ 
	Lingda Wang\textsuperscript{1}\thanks{indicates equal contributions.}, Huozhi Zhou\textsuperscript{1}\footnotemark[1], Bingcong Li\textsuperscript{2}, Lav R. Varshney\textsuperscript{1}, Zhizhen Zhao\textsuperscript{1} 	\vspace{0.1cm} \\
	\textsuperscript{1}{\normalsize\textit{ECE Department and CSL, University of Illinois at Urbana-Champaign, Urbana, IL 61801, USA}} \\	 
	\texttt{\normalsize\{lingdaw2, hzhou35, varshney, zhizhenz\}@illinois.edu} \\
	\textsuperscript{2}{\normalsize\textit{ECE Department, University of Minnesota - Twin Cities, Minneapolis, MN 55455, USA}} \\
	\normalsize\texttt{lixx5599@umn.edu} \\	
}
\begin{document}

\maketitle
\begin{abstract}
Cascading bandit (CB) is a popular model for web search and online advertising, where an agent aims to learn the $K$ most attractive items out of a ground set of size $L$ during the interaction with a user. However, the stationary CB model may be too simple to apply to real-world problems, where user preferences may change over time. Considering piecewise-stationary environments, two efficient algorithms, \texttt{GLRT-CascadeUCB} and \texttt{GLRT-CascadeKL-UCB}, are developed and shown to ensure regret upper bounds on the order of $\mathcal{O}(\sqrt{NLT\log{T}})$, where $N$ is the number of piecewise-stationary segments, and $T$ is the number of time slots. At the crux of the proposed algorithms is an almost parameter-free change-point detector, the generalized likelihood ratio test (GLRT). Comparing with existing works, the GLRT-based algorithms: i) are free of change-point-dependent information for choosing parameters; ii) have fewer tuning parameters; iii) improve at least the $L$ dependence in regret upper bounds. In addition, we show that the proposed algorithms are optimal (up to a logarithm factor) in terms of regret by deriving a minimax lower bound on the order of $\Omega(\sqrt{NLT})$ for piecewise-stationary CB. The efficiency of the proposed algorithms relative to state-of-the-art approaches is validated through numerical experiments on both synthetic and real-world datasets. 
\end{abstract}

\section{Introduction}
\label{sec:Intro}

Online recommendation~\citep{li2016collaborative} and web search~\citep{dupret2008user,zoghi2017online} are of significant importance in the modern economy. Based on a user's browsing history, these systems strive to maximize satisfaction and minimize regret by presenting the user with a list of items (e.g., web pages and advertisements) that meet her/his preference. Such a scenario can be modeled via cascading bandits (CB)~\citep{kveton2015cascading}, where an agent aims to identify the $K$ most attractive items out of total $L$ items contained in the ground set. The learning task proceeds sequentially, where per time slot, the agent recommends a ranked list of $K$ items and receives the reward and feedback on which item is clicked by the user.

CB can be viewed as multi-armed bandits (MAB) tailored for cascade model (CM)~\citep{craswell2008experimental}, where CM models a user’s online behavior. Existing works on CB~\citep{kveton2015cascading,cheung2019thompson} and MAB~\citep{lai1985asymptotically,auer2002finite,li2019bandit} can be categorized according to whether stationary or non-stationary environment is studied. In stationary environments, the reward distributions of arms (in MAB) or the attraction distributions of items (in CB) do not evolve over time. On the other hand, non-stationary environments are prevalent in real-world applications such as web search, online advertisement, and recommendation since user's preference is time-varying~\citep{jagerman2019people,yu2009piecewise,pereira2018analyzing}. Algorithms designed for stationary scenarios can suffer from a linear regret when applied to non-stationary environments directly~\citep{li2019cascading,garivier2011upper}. The most common non-stationary environments include adversarial~\citep{auer2002nonstochastic,littlestone1994weighted}, piecewise-stationary~\citep{hartland2007change,kocsis2006discounted,garivier2011upper}, and slow-varying environment~\citep{besbes2014stochastic}. More recently, the stochastic environment with adversarial corruptions and the stochastically-constrained
adversarial environment are proposed by~\citet{lykouris2018stochastic} and~\citet{wei2018more}, respectively, which are mixtures of both stochastic and adversarial environments. Other interesting non-stationary bandit models can be found in~\citet{whittle1988restless},~\citet{cella2019stochastic}, etc. 

In this paper, we focus on the piecewise-stationary environment, where the user's preference remains stationary over some number of time slots, named \emph{piecewise-stationary segments}, but can shift abruptly at some unknown times, called \emph{change-points}. Piecewise-stationary models provide more accurate characterization of real-world applications. For instance, in recommendation systems, user's preference for an item is neither invariant nor changing per time slot.

To address the piecewise-stationary MAB, two types of approaches have been proposed in the literature: \emph{passively adaptive approaches}~\citep{garivier2011upper,besbes2014stochastic,wei2018abruptly} and \emph{actively adaptive approaches}~\citep{cao2019nearly,liu2018change,besson2019generalized,auer2019adaptively}. Passively adaptive approaches ignore when a change-point occurs. For active adaptive approaches, a change-point detection algorithm such as CUSUM~\citep{page1954continuous,liu2018change}, Page Hinkley Test (PHT)~\citep{hinkley1971inference,liu2018change}, or comparing running sample means over a sliding window (CMSW)~\citep{cao2019nearly} is included. Within the area of piecewise-stationary CB, only passively adaptive approaches have been studied~\citep{li2019cascading}. In this context, we introduce the generalized likelihood ratio test (GLRT)~\citep{willsky1976generalized,besson2019generalized}
for actively adaptive CB algorithms. In particular, we develop two GLRT based algorithms \texttt{GLRT-Cascade-UCB} and \texttt{GLRT-CascadeKL-UCB} to enhance both theoretical and practical effectiveness for piecewise-stationary CB. The merits of the proposed algorithms are summarized as follows
\begin{enumerate}
    
    \item \textbf{Practically oriented.} The proposed GLRT based algorithms are more practical than previous CMSW or CUSUM based works~\citep{liu2018change,cao2019nearly}, since: i) no change-point-dependent parameter is required by GLRT; ii) have fewer tuning parameters.
    \item \textbf{Tighter regret bounds.}  $\mathcal{O}(\sqrt{NLT\log{T}})$ regret bounds of both proposed algorithms are established, where $L$ is the number of items and $T$ is the number of time slots. 
    Our regret bound tightens those of~\citet{li2019cascading} by a factor of $\sqrt{L}$ and $\sqrt{L}\log T$, respectively.
    \item \textbf{Lower-bound matching.} We establish that the minimax regret lower bound for piecewise-stationary CB is $\Omega(\sqrt{NLT})$. Such a lower bound: i) implies the proposed algorithms are optimal up to a logarithm factor; ii) is the first to characterize dependence on $N$, $L$, and $T$ for piecewise-stationary CB.
    \item \textbf{Numerically attractive.} On both synthetic and real-world datasets, numerical experiments reveal the merits of  proposed algorithms over state-of-the-art approaches. 
\end{enumerate}

The remainder of the paper is organized as follows. We describe the problem formulation in Section~\ref{sec:setting}. The GLRT change-point detector together with the proposed algorithms, \texttt{GLRT-CascadeUCB} and \texttt{GLRT-CascadeKL-UCB}, are detailed in Section~\ref{sec:algorithms}. We prove upper bounds on the regret of the proposed algorithms and the minimax regret lower bound in Section~\ref{sec:analysis}. Numerical experiments are presented in Section~\ref{sec:Experiments}. Finally, we conclude the paper in Section~\ref{sec:Conclusion}. 


\section{Problem Formulation}
\label{sec:setting}
This section first reviews the CM and CB in Section~\ref{sec:cmcb}, and then introduces the piecewise-stationary CB in Section~\ref{sec:pscb}.  
\subsection{Cascade Model and Cascading Bandits}
\label{sec:cmcb}

CB~\citep{kveton2015cascading}, as a learning variant of CM, depicts the interaction between the agent and the user on $T$ time slots, where the user's preference is learned. CM~\citep{craswell2008experimental} explains the user's behavior in a specific time slot $t$.  

In CM, the user is presented with a $K$-item ranked list $\mathcal{A}_t := \left(a_{1,t},\ldots, a_{K,t}\right)\in\Pi_K\left(\mathcal{L}\right)$ from $\mL$ at time slot $t$, where $\mL:=\{1,2,\ldots,L\}$ is a ground set containing $L$ items (e.g., web pages or advertisements), and $\Pi_K\left(\mathcal{L}\right)$ is the set of all $K$-permutations of the ground set $\mathcal{L}$. CM can be parameterized by the attraction probability vector $\mathbf{w}_t = \left[\mathbf{w}_t(1),\ldots,\mathbf{w}_t(L)\right]^\top\in[0,1]^L$. The user browses the list $\mathcal{A}_t$ from the first item $a_1$ in order, and each item $a_k$ attracts the user to click it with probability $\mathbf{w}_t(a_k)$. The user will stop the process after clicking the first attractive item. In particular, when an item $a_{k,t}$ is clicked, it means that i) items from $a_{1,t}$ to $a_{k-1,t}$ are not attractive to the user; and ii) items $a_{k+1,t}$ to $a_{K,t}$ are not browsed so whether they are attractive to the user is unknown. Clearly, if no item is attractive, the user will browse the whole list and click on nothing. Note that CM can be generalized to multi-click cases \citep{wang2015incorporating,yue2010learning}, but this is beyond the scope of this paper.

Building upon CM, a CB can be described by a tuple $(\mathcal{L},\,\mathcal{T},\,\{f_{\ell,t}\}_{\ell\in\mathcal{L},t\in\mathcal{T}},K)$, where $\mathcal{T}:=\{1,2,\ldots,T\}$ collects all $T$ time slots. 
Whether the user is attracted by item $\ell$ at time slot $t$ is actually a Bernoulli random variable $Z_{\ell,t}$, whose pmf is $f_{\ell,t}$. As convention, $Z_{\ell,t} = 1$ indicates item $\ell$ is attractive to the user. We also denote $\mathbf{Z}_t:=\left\{Z_{\ell,t}\right\}_{\ell\in\mathcal{L}}$ as all the attraction variables of the ground set. Clearly, the $\{f_{\ell,t }\}_{\ell\in\mathcal{L},t\in\mathcal{T}}$ are parameterized by the attraction probability vectors $\{\mathbf{w}_t\}_{t\in\mathcal{T}}$, which are unknown to the agent. Since CB is designed for stationary environments, the attraction probability vector $\mathbf{w}_t$ is time-invariant, and thus can be further simplified as $\mathbf{w}$. CB poses a mild assumption on $\{f_{\ell,t }\}_{\ell\in\mathcal{L},t\in\mathcal{T}}$ for simplicity.

\begin{assu}
\label{assump:ind}
The attraction distributions $\{f_{\ell,t}\}_{\ell\in\mathcal{L},t\in\mathcal{T}}$ are independent both across items and time slots.
\end{assu}
Per slot $t$, the agent recommends a list of $K$ items $\mathcal{A}_t$ to the user based on the feedback from the user up to time slot $t-1$. The feedback at time slot $t$ refers to the index of the clicked item, given by
\begin{align*}
  F_t =
  \begin{cases}
  \emptyset,\,&\mbox{if no click},\\
  \arg\min_k\{1\le k\le
  K:Z_{a_{k,t},t}=1\},\,&\mbox{otherwise}.\\
  \end{cases}
\end{align*}
After the user browses the list follows the protocol described by CM, the agent observes the feedback $F_t$. Along with $F_t$ is a zero-one reward indicating whether there is a click
\begin{equation}
\label{eq:reward}
     r\left(\mathcal{A}_t,\mathbf{Z}_t\right)=1-\prod_{k=1}^K\left(1-Z_{a_{k,t},t}\right),
\end{equation}
where $r\left(\mathcal{A}_t,\mathbf{Z}_t\right) = 0$ if $F_t = \emptyset$.
Then, this process proceeds to time slot $t+1$. The goal of the agent is to maximize the expected cumulative reward over the whole time horizon $\mT$. Noticing that $Z_{\ell,t}$s are independent, the expected reward at time slot $t$ can be computed as $\E\left[ r\left(\mathcal{A}_t,\mathbf{Z}_t\right)\right] = r\left(\mathcal{A}_t,\mathbf{w}\right)$. The optimal list $\mathcal{A}^*$ remains the same for all time slots, which is the list containing the $K$ most attractive items. 

\subsection{Piecewise-Stationary Cascading Bandits}
\label{sec:pscb}

The stationarity assumption on CB limits its applicability for real-world applications, as users tend to change their preferences as time goes on~\citep{jagerman2019people}. This fact leads to piecewise-stationary CB. Consider a piecewise-stationary CB with $N$ segments, where the attraction probabilities of items remain identical per segment. Mathematically, $N$ can be written as
\begin{equation}
\label{def:N}
    N = 1+\sum_{t=1}^{T-1}\mathbb{I}\{\exists \ell\in\mathcal{L}~\mbox{s.t.}~f_{\ell,t}\neq f_{\ell,t+1}\},
\end{equation}
where $\mathbb{I}\{\cdot\}$ is the indicator function, and a change-point is the time slot $t$ that satisfies $\exists \ell\in\mathcal{L}~\mbox{s.t.}~f_{\ell,t}\neq f_{\ell,t+1}$. Hence it is clear that there are $N-1$ change-points in the piecewise-stationary CB considered. These change-points are denoted by $\nu_1,\ldots,\nu_{N-1}$ in a chronological manner. Specifically, $\nu_0=0$ and $\nu_N=T$ are introduced for consistency. For the $i$th piecewise-stationary segment $t\in[\nu_{i-1}+1,\nu_i]$, $f_{\ell}^i$ and $\mathbf{w}^i(\ell)$ denote the attraction distribution and the expected attraction of item $\ell$, respectively, which are again unknown to the agent. Attraction probability vector $\mathbf{w}^i=[\mathbf{w}^i(1),\ldots,\mathbf{w}^i(L)]^\top$ is introduced to collect $\mathbf{w}^i(\ell)$s. 

In a piecewise-stationary CB, agent interactions are the same as CB. The agent's policy can be evaluated by its expected cumulative reward, or equivalently its expected cumulative regret:
\begin{equation}
\label{eq:regret}
    \mathcal{R}(T)=\mathbb{E}\left[\sum_{t=1}^T R\left(\mathcal{A}_t,\mathbf{w}_t,\mathbf{Z}_t\right)\right],
\end{equation}
where the expectation $\mathbb{E}[\cdot]$ is taken with respect to a sequence of $\mathbf{Z}_t$ and the corresponding $\mathcal{A}_t$. Here, $R(\mathcal{A}_t,\mathbf{w}_t,\mathbf{Z}_t)=r(\mathcal{A}^*_t,\mathbf{w}_t)-r(\mathcal{A}_t,\mathbf{Z}_t)$ is the regret at time slot $t$ with
\begin{equation*}
  \mathcal{A}^*_t=\arg\max_{\mathcal{A}_t\in\Pi_K(\mathcal{L})}  r\left(\mathcal{A}_t,\mathbf{w}_t\right)
\end{equation*}
being the optimal list that maximizes the expected reward at time slot $t$. The regret defined in \eqref{eq:regret} is also known as switching regret, which is widely adopted in piecewise-stationary bandits~\citep{kocsis2006discounted,garivier2011upper,liu2018change,cao2019nearly,besson2019generalized}. Since switching regret is measured with respect to the optimal piecewise-stationary policy, the optimal list $\mathcal{A}^*_t$ for each time slot is no longer time-invariant. This leads to a much harder algorithm design problem, since the non-stationary environment should be properly coped with.

\section{Algorithms}
\label{sec:algorithms}

This section presents adaptive approaches for piecewise-stationary CB using an efficient change-point detector.

\subsection{Generalized Likelihood Ratio Test}
\begin{algorithm}
\footnotesize
\caption{GLRT Change-Point Detector: $\mbox{GLRT}(X_1,\ldots, X_n;\delta)$}
\label{alg:cd}
\begin{algorithmic}[1]
\REQUIRE observations $X_1, \ldots, X_n$ and confidence level $\delta$
\STATE Compute the GLR statistic $\mbox{GLR}(n)$ according to~\eqref{eq:GLR_STAT} and the threshold $\beta(n,\delta)$ according to~\eqref{eq:thre}
\IF{$\mbox{GLR}(n)\ge\beta(n,\delta)$} 
\STATE Return True
\ELSE 
\STATE Return False
\ENDIF
\end{algorithmic}
\end{algorithm}

As the adaptive approach is adopted in this paper, a brief introduction about change-point detection is given in this subsection. Sequential change-point detection is of fundamental importance in statistical sequential analysis, see e.g., \citep{hadjiliadis2006optimal,siegmund2013sequential,draglia1999multihypothesis,siegmund1995using,lorden1971procedures,moustakides1986optimal}. However, the aforementioned approaches typically rely on the knowledge of either pre-change or post-change distribution, rendering barriers for the applicability in piecewise-stationary CB.

In general, with pre-change and post-change distributions unknown, developing algorithms with provable guarantees is challenging. Our solution relies on the GLRT that is summarized under Algorithm~\ref{alg:cd}. Compared with existing change-point detection methods that have provable guarantees \citep{liu2018change,cao2019nearly}, advantages of GLRT are threefold: i) \emph{Fewer tuning parameters}. The only required parameter for GLRT is the confidence level of change-point detection $\delta$, while CUSUM~\citep{liu2018change} and CMSW~\citep{cao2019nearly} have three and two parameters to be manually tuned, respectively. ii) \emph{Less prior knowledge needed}. GLRT does not require the information on the smallest magnitude among the change-points, which is essential for CUSUM. iii) \emph{Better performance}. The GLRT is more efficient than CUSUM and CMSW in the averaged detection time. As shown in the numerical experiments in Example~\ref{exp:1}, GLRT has approximately $20\%$ and $50\%$ improvement over CUSUM and CMSW, respectively.

Next, the GLRT is formally introduced. Suppose we have a sequence of Bernoulli random variables $\{X_t\}_{t=1}^n$ and aim to determine if a change-point exists as fast as we can. This problem can be formulated as a parametric sequential test of the following two hypotheses: 
\begin{align*}
&\mathcal{H}_0:\exists \mu_0:X_1,\ldots,X_n\overset{\textup{i.i.d}}{\sim}~\mbox{Bern}(\mu_0),\\
&\mathcal{H}_1:\exists \mu_0\neq \mu_1,\, \tau\in[1,n-1]:X_1,\ldots,X_\tau\overset{\textup{i.i.d}}{\sim}~\mbox{Bern}(\mu_0)~\mbox{and}~X_{\tau+1},\ldots,X_{n}\overset{\textup{i.i.d}}{\sim}\mbox{Bern}(\mu_1),
\end{align*}
where $\textup{Bern}(\mu)$ is the Bernoulli distribution with mean $\mu$. The GLR statistic is 
\begin{align}
\label{eq:GLR_STAT}
    \mbox{GLR}(n)=\sup_{s\in[1,n-1]}&[s\times\text{KL}\left(\hat{\mu}_{1:s},\hat{\mu}_{1:n}\right)+(n-s)\times\text{KL}\left(\hat{\mu}_{s+1:n},\hat{\mu}_{1:n}\right)],
\end{align}
where $\hat{\mu}_{s:s'}$ is the empirical mean of observations from $X_s$ to $X_{s'}$, and $\mbox{KL}(x,y)$ is the Kullback–Leibler (KL) divergence of two Bernoulli distributions,
\begin{equation*}
    \mbox{KL}(x,y) = x\log\left(\frac{x}{y}\right)+(1-x)\log\left(\frac{1-x}{1-y}\right).
\end{equation*}

By comparing $\mbox{GLR}(n)$ in \eqref{eq:GLR_STAT} with the threshold $\beta(t,\delta)$, one can decide whether a change-point appears for a length $n$ sequence, where 
\begin{equation}
\label{eq:thre}
    \beta(t,\delta)=2\mathcal{G}\left(\frac{\log(3t\sqrt{t}/\delta)}{2}\right)+6\log(1+\log{t}),
\end{equation}
and $\mathcal{G}(\cdot)$ has the same definition as that in (13) of~\citet{kaufmann2018mixture}. The choice of $\delta$ is influences the sensitivity of the GLRT. For example, a larger $\delta$ makes the GLRT response faster to a change-point, but increases the probability of false alarm.

The efficiency of a change-point detector for a length $n$ sequence is evaluated via its detection time,
\begin{equation*}
 \tau=\inf\{t\le n:\textup{GLR}(t)\ge\beta(t,\delta)\}.  
\end{equation*}

To better understand the performance of GLRT against CUSUM ans CMSW, it is instructive to use an example.

\begin{example}[Efficiency of GLRT]
\label{exp:1}
Consider a sequence of Bernoulli random variables $\{X_t\}_{t=1}^{n}$ with $n=4000$, where $X_1,\cdots,X_{2000}$ are generated from \textup{Bern(0.2)} and the remaining ones are generated from \textup{Bern(0.8)}, as shown in Figure~\ref{fig:expample1_tra} (red line). By setting $\delta = 1/n$ for GLRT and choosing parameters of CUSUM and CMSW as recommended in~\citet{liu2018change} and~\citet{cao2019nearly}, the average detection times after 100 Monte Carlo trials are $2024.55\pm 6.8451$ (GLRT, green line), $2030.25\pm6.74$ (CUSUM, blue line), and $2045.59\pm 4.48$ (CMSW, black line), respectively. In a nutshell, GLRT improves about $20\%$ over CUSUM and $50\%$ over CMSW.
\end{example} 

\begin{figure}[htb]
	\begin{center}
	\includegraphics[width=0.8 \linewidth]{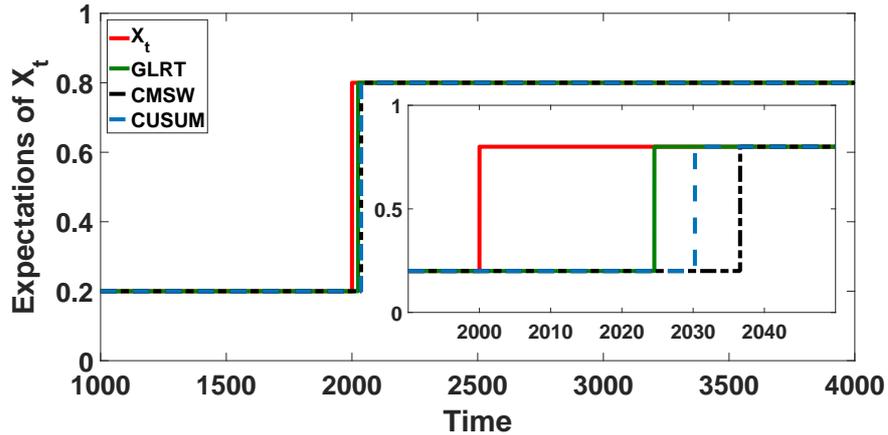}
	\end{center}
	\caption{Expectations of $X_t$'s with $n=4000$ and expected detection time of GLRT, CUSUM, and CMSW.}
	\label{fig:expample1_tra}
\end{figure}

\subsection{The GLRT Based CB Algorithms}
Leveraging GLRT as the change-point detector, the proposed algorithms, \texttt{GLRT-CascadeUCB} and \texttt{GLRT-CascadeKL-UCB}, are presented in Algorithm~\ref{alg:bandit}. On a high level, three phases comprise the proposed algorithms.  \\ \emph{Phase 1}: The forced uniform exploration to ensure that sufficient samples are gathered for all items to perform the GLRT detection (Algorithm~\ref{alg:cd}).\\ \emph{Phase 2}: The UCB-based exploration (UCB or KL-UCB) to learn the optimal list on each piecewise-stationary segment. \\
\emph{Phase 3}: The GLRT change-point detection (Algorithm~\ref{alg:cd}) to monitor if global restart should be triggered. 

Besides the time horizon $\mathcal{T}$, the ground set $\mathcal{L}$, the number of items in list $K$, the proposed algorithms only require two parameters $p$ and $\delta$ as inputs. The probability $p$ is used to control the portion of uniform exploration in Phase 1, and it appears also in other bandit algorithms for piecewise-stationary environments~\citep{liu2018change,cao2019nearly}. While the confidence level $\delta$ is the only parameter required by GLRT. Hence, the proposed algorithms are more practical compared with existing algorithms\citep{liu2018change,cao2019nearly}, since: i) no prior knowledge on change-point-dependent parameter is needed; ii) fewer parameters are required. The choices of $\delta$ and $p$ will be clear in Section~\ref{sec:analysis}.

In Algorithm 2, we denote the last detection time as $\tau$. From slot $\tau$ to current slot, let $n_l$ denote the number of observations for $\ell$th item, and $\hat{\mathbf{w}}(\ell)$ its corresponding sample mean. The algorithm determines whether to perform a uniform exploration or a UCB-based exploration depending on whether line 4 of Algorithm~\ref{alg:bandit} is satisfied, which ensures the fraction of time slots performing the uniform exploration phase is about $p$. If the uniform exploration is triggered, the first item in the recommended list $\mathcal{A}_t$ will be item $a:=(t-\tau)\mod\floor{\frac{L}{p}}$, and the remaining items in the list are chosen uniformly at random (line 5), which ensures item $a$ will be observed by the user. If UCB-based exploration is adopted at time slot $t$, the algorithms will choose $K$ items (line 7) with $K$ largest UCB indices, 
\begin{equation}
\label{eq:A_t}
    \mathcal{A}_t = \arg\max_{\mathcal{A}\in\Pi_K(\mathcal{L})}\quad r\left(\mathcal{A},\mbox{UCB or }\textup{UCB}_{\textup{KL}}\right),
\end{equation}
which will be defined in~\eqref{eq:UCB1} and~\eqref{eq:KL-UCB}.
By recommending the list $\mathcal{A}_t$ and observing the user's feedback $F_t$ (line 9), we update the statistics (line 11) and perform the GLRT detection (line 12). If a change-point is detected, we set $n_\ell=0$ for all $\ell\in\mathcal{L}$, and $\tau=t$ (line 13). Finally, the UCB indices of each item are computed as follows (line 18),
\begin{align}
    \mbox{UCB}(\ell) &= \hat{\mathbf{w}}(\ell)+\sqrt{\frac{3\log(t-\tau)}{2n_\ell}},\label{eq:UCB1}\\
    \mbox{UCB}_{\textup{KL}}(\ell)&=\max\{q\in[\hat{\mathbf{w}}(\ell),1]:n_\ell\times\mbox{KL}(\hat{\mathbf{w}}(\ell),q)\le g(t-\tau)\},\label{eq:KL-UCB}
\end{align}
where $g(t)=\log{t}+3\log{\log{t}}$, and $\hat{\mathbf{w}}(\ell)=\frac{1}{n_\ell}\sum_{n=1}^{n_\ell}X_{\ell,n}$. Notice that \eqref{eq:UCB1} is the UCB indices of \texttt{GLRT-CascadeUCB}, and \eqref{eq:KL-UCB} is the UCB indices of \texttt{GLRT-CascadeKL-UCB}. For the intuitions behind, we refer the readers to Proof of Theorem 1 in~\citet{auer2002finite} and Proof of Theorem 2 in~\citet{cappe2013kullback}.
\begin{algorithm}
\footnotesize
\caption{The \texttt{GLRT-CascadeUCB} and \texttt{GLRT-CascadeKL-UCB} Algorithms}
\label{alg:bandit}
\begin{algorithmic}[1]
\REQUIRE The time horizon $\mathcal{T}$, the ground set $\mathcal{L}$, $K$, exploration probability $p>0$, and confidence level $\delta>0$
\STATE \textbf{Initialization: }$\tau \leftarrow 0$ and $n_\ell \leftarrow 0$, $\forall \ell \in \mathcal{L}$
\FORALL{$t=1,2,\ldots, T$}
\STATE $a \leftarrow (t-\tau)\mod\floor{\frac{L}{p}}$
\IF{$a\le L$} 
\STATE Choose $\mathcal{A}_t$ such that $a_{1,t}\leftarrow a$ and $a_{2,t},\ldots,a_{K,t}$ are chosen uniformly at random
\ELSE
\STATE Compute the list $\mathcal{A}_t$ follows~\eqref{eq:A_t}
\ENDIF
\STATE Recommend the list $\mathcal{A}_t $ to user, and observe feedback $F_t$
\FORALL{$k=1,\ldots,F_t$}
\STATE $\ell\leftarrow a_{k,t}$, $n_\ell\leftarrow n_\ell+1$, $X_{\ell,n_\ell}\leftarrow \mathbb{I}\{F_t=k\}$ and $\hat{\mathbf{w}}(\ell)=\frac{1}{n_\ell}\sum_{n=1}^{n_\ell}X_{\ell,n}$
\IF{$\mbox{GLRT}(X_{\ell,1},\ldots,X_{\ell,n_\ell};\delta)$ = True}
\STATE $n_\ell\leftarrow 0,\,\forall\ell\in\mathcal{L}$, and $\tau\leftarrow t$
\ENDIF
\ENDFOR
\FOR{$\ell = 1,\cdots,L$}
\IF{$n_\ell\neq 0$}
\STATE Compute $\textup{UCB}(\ell)$ according to~\eqref{eq:UCB1} for \texttt{GLRT-CascadeUCB} or $\textup{UCB}_{\textup{KL}}(\ell)$ according to~\eqref{eq:KL-UCB} for \texttt{GLRT-CascadeKL-UCB}
\ENDIF
\ENDFOR
\ENDFOR
\end{algorithmic}
\end{algorithm}

\section{Theoretical Results}
\label{sec:analysis}
The theoretical guarantees of the proposed algorithms, \texttt{GLRT-CascadeUCB} and \texttt{GLRT-CascadeKL-UCB}, will be derived in this section. Specifically, the upper bounds on the regret of both proposed algorithms are developed in Sections~\ref{sec:5.1} and~\ref{sec:5.2}. A minimax regret lower bound for piecewise-stationary CB is established in Section~\ref{subsec:lb}. We further discuss our theoretical findings in Section~\ref{sec:5.3}.

Without loss of generality, for the $i$th piecewise-stationary segment, the ground set $\mathcal{L}$ is first sorted in decreasing order according to attraction probabilities, that is $\mathbf{w}^i(s_i(1))\ge\mathbf{w}^i(s_i(2))\ge\cdots\ge\mathbf{w}^i(s_i(L))$, for all $s_i(\ell)\in\mathcal{L}$. The optimal list at $i$th segment is thus all the permutations of the list $\mathcal{A}^*_i=\{s_i(1),\ldots,s_i(K)\}$. The item $\ell^*$ is optimal if $\ell^*\in\{s_i(1),\ldots,s_i(K)\}$, otherwise an item $\ell$ is called suboptimal. To simplify the exposition, the gap between the attraction probabilities of the suboptimal item $\ell$ and the optimal item $\ell^*$ at $i$th segment is defined as:
\begin{equation*}
\Delta^i_{\ell,\ell^*}=\mathbf{w}^i(\ell^*)-\mathbf{w}^i(\ell).
\end{equation*}
Similarly, the largest amplitude change among items at change-point $\nu_i$ is defined as
\begin{equation*}
\Delta^i_{\text{change}}=\max_{\ell\in\mathcal{L}}\left|\mathbf{w}^{i+1}(\ell)-\mathbf{w}^i(\ell)\right|,  
\end{equation*}
with $\Delta^0_{\text{change}}=\max_{\ell\in\mathcal{L}}\left|\mathbf{w}^1(\ell)\right|$. We have the following assumption for the theoretical analysis.
\begin{assu}
\label{assump:segment}
Define $d_i = d_i\left(p,\delta\right)=\ceil{\frac{4L\beta(T,\delta)}{p(\Delta^i_\textup{change})^2}+\frac{L}{p}}$ and assume $\nu_i-\nu_{i-1}\ge 2\max\{d_i,d_{i-1}\}$, $\forall i=1,\ldots,N-1$, with $\nu_N-\nu_{N-1}\ge2d_{N-1}$.
\end{assu}
Note that Assumption~\ref{assump:segment} is standard in a piecewise-stationary environment, and identical or similar assumptions are made in other change-detection based bandit algorithms~\citep{liu2018change,cao2019nearly,besson2019generalized} as well. It requires the length of the piecewise-stationary segment between two change-points to be large enough. Assumption~\ref{assump:segment} guarantees that with high probability all the change-points are detected within the interval $[\nu_i+1,\nu_i+d_i]$, which is equivalent to saying all change-points are detected correctly (low probability of false alarm) and quickly (low detection delay). This result is formally stated in Lemma~\ref{lem:cond_prob}. In our numerical experiments, the proposed algorithms work well even when Assumption~\ref{assump:segment} does not hold (see Section~\ref{sec:Experiments}).

\subsection{Regret Upper Bound for \texttt{GLRT-CascadeUCB}}

\label{sec:5.1}

Upper bound on the regret of \texttt{GLRT-CascadeUCB} is as follows.
\begin{theorem} 
\label{thm:1}
Suppose that Assumptions~\ref{assump:ind} and~\ref{assump:segment} are satisfied, \textup{\texttt{GLRT-CascadeUCB}} guarantees
\begin{equation*}
    \mathcal{R}(T)\le\underbrace{\sum_{i=1}^N\widetilde{C}_i}_{(a)}+\underbrace{Tp}_{(b)}+\underbrace{\sum_{i=1}^{N-1}d_i}_{(c)}+\underbrace{3NTL\delta}_{(d)},
\end{equation*}
where $\widetilde{C}_i=\sum_{\ell=K+1}^L\frac{12}{\Delta^i_{s_i(\ell),s_i(K)}}\log{T}+\frac{\pi^2}{3}L$.
\end{theorem}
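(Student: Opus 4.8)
The plan is to split $\mathcal{R}(T)$ into four contributions matching the terms $(a)$–$(d)$, using Lemma~\ref{lem:cond_prob} as the backbone. First I would introduce the detection good event $\mathcal{G}$: every true change-point $\nu_i$ triggers a global restart within the window $[\nu_i+1,\nu_i+d_i]$, and no restart is ever triggered inside a stationary stretch. Assumption~\ref{assump:segment} ($\nu_i-\nu_{i-1}\ge 2\max\{d_i,d_{i-1}\}$) is exactly what guarantees that the restart caused by $\nu_{i-1}$ precedes $\nu_i$, so these detection windows tile the horizon without overlap. Abbreviating the per-slot regret $R(\mathcal{A}_t,\mathbf{w}_t,\mathbf{Z}_t)$ by $R_t$ and noting $R_t\le 1$, I would write $\mathcal{R}(T)=\E[\sum_t R_t\,\indfunc\{\mathcal{G}\}]+\E[\sum_t R_t\,\indfunc\{\mathcal{G}^c\}]$; the bad-event term is at most $T\,\Prob(\mathcal{G}^c)$, and Lemma~\ref{lem:cond_prob} bounds $\Prob(\mathcal{G}^c)\le 3NL\delta$ (a union over the $N$ segments and $L$ items of the GLRT false-alarm/late-detection guarantee, whose time-uniformity across the $T$ tests is already built into $\beta(\cdot,\delta)$ via~\citet{kaufmann2018mixture}), giving $(d)=3NTL\delta$.

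On $\mathcal{G}$ I would partition $[\nu_{i-1}+1,\nu_i]$ into a detection-delay window $[\nu_{i-1}+1,\tau_{i-1}]$ of length at most $d_{i-1}$, where $\tau_{i-1}\le\nu_{i-1}+d_{i-1}$ is the restart time caused by $\nu_{i-1}$ and the item statistics are still stale, and a clean window $[\tau_{i-1}+1,\nu_i]$ on which every observation originates from segment $i$. Bounding each delay slot by $R_t\le 1$ and summing over $i=1,\dots,N-1$ gives $(c)=\sum_{i=1}^{N-1}d_i$. Inside each clean window I separate the forced-exploration slots (line~5) from the UCB slots (line~7): since the rule $a=(t-\tau)\bmod\floor{L/p}$ forces exploration on at most a $p$-fraction of slots and each costs regret at most $1$, the total exploration regret over the whole horizon is at most $(b)=Tp$.

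It remains to bound the regret on the UCB slots of each clean window, which yields $(a)$. Because a global restart zeroes all counters, each clean window is a bona fide stationary cascading bandit, so I would invoke the CascadeUCB1 gap analysis of~\citet{kveton2015cascading}, adapted in two respects: the confidence radius is built from the time-since-restart $t-\tau\le T$ with the coefficient $3/2$ of~\eqref{eq:UCB1}, and $\hat{\mathbf{w}}(\ell)$ also absorbs the samples harvested on exploration slots. The crucial observation is that these extra samples only tighten each item's empirical mean, so the number of UCB rounds on which a suboptimal item's index overtakes that of $s_i(K)$ is no larger than in a pure-UCB run; the standard prefix/gap decomposition then bounds the segment-$i$ UCB regret by $\widetilde{C}_i=\sum_{\ell=K+1}^L\frac{12}{\Delta^i_{s_i(\ell),s_i(K)}}\log T+\frac{\pi^2}{3}L$, where $\log T$ over-estimates $\log(t-\tau)$, the constant $12$ stems from the $3/2$ confidence width, and $\frac{\pi^2}{3}L$ absorbs the summed concentration-failure probabilities via $\sum_t t^{-2}$. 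Summing over the $N$ segments gives $(a)$, and collecting $(a)$–$(d)$ completes the proof.

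I expect the main obstacle to be this clean-window reduction to stationary CascadeUCB: one must argue rigorously that interleaving forced exploration with the UCB rule leaves the gap-based counting intact — the exploration rounds must be shown to be \emph{free} for $(a)$ (their cost living entirely in $(b)$) while only improving concentration — and that inflating the true window length to $T$ inside the logarithm is the sole slack introduced. A secondary but essential point is the tiling bookkeeping: Assumption~\ref{assump:segment} must be invoked to certify that the restart following $\nu_{i-1}$ lands before $\nu_i$, so that the delay windows and clean windows do not overlap and the per-segment bounds may simply be summed.
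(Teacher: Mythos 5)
Your proposal is correct and follows essentially the same route as the paper's own proof: the same four-way decomposition into per-segment stationary CascadeUCB1 regret (Kveton et al.'s $\widetilde{C}_i$), forced-exploration cost $Tp$, detection delays $\sum_{i=1}^{N-1} d_i$, and a bad-detection term of order $3NTL\delta$ controlled by Lemma~\ref{lem:cond_prob}. The only difference is bookkeeping: you bound a single global good event with one union bound, whereas the paper peels off one segment at a time via conditional expectations (Lemmas~\ref{lem:stationary_bandit}--\ref{lem:bound_delay}), and both organizations yield the identical four terms.
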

\begin{proof}
The theorem is proved in Appendix~\ref{sec:proof_thm1}.
\end{proof}
Theorem~\ref{thm:1} indicates that the upper bound on the regret of \texttt{GLRT-CascadeUCB} is incurred by two types of costs that are further decomposed into four terms. Terms (a) and (b) upper bound the costs of UCB-based exploration and uniform exploration, respectively. The costs incurred by the change-point detection delay and the incorrect detections are bounded by terms (c) and (d). Corollary~\ref{col:1} follows directly from Theorem~\ref{thm:1}. 

\begin{corollary}
\label{col:1}
Let $\Delta^{\textup{min}}_{\textup{change}} = \min_{i\le N-1}\Delta^i_{\textup{change}}$ denote the smallest magnitude of any change-point on any item, and $\Delta^{\textup{min}}_{\textup{opt}} = \min_{i\le N}\Delta^i_{s_i(K+1),s_i(K)}$ be the smallest magnitude of a suboptimal gap on any one of the stationary segments.
 The regret of \textup{\texttt{GLRT-CascadeUCB}} is established by choosing  $\delta=\frac{1}{T}$ and $p=\sqrt{\frac{NL\log{T}}{T}}$:
    \begin{equation}
        \label{eq:u1}
        \mathcal{R}(T)=\mathcal{O}\left(\frac{N(L-K)\log{T}}{\Delta^{\textup{min}}_{\textup{opt}}}+\frac{\sqrt{NLT\log{T}}}{\left(\Delta^{\textup{min}}_{\textup{change}}\right)^2}\right).
    \end{equation}
\end{corollary}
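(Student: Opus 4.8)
The plan is to obtain Corollary~\ref{col:1} as a direct consequence of Theorem~\ref{thm:1}: I substitute $\delta=1/T$ and $p=\sqrt{NL\log T/T}$ into the four-term bound and track the asymptotic order of each of (a)--(d) separately, then collect the surviving (dominant) terms. No new probabilistic argument is needed; the work is careful order bookkeeping, and the only nontrivial ingredient is estimating $\beta(T,1/T)$ through the asymptotics of $\mathcal{G}$.

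I would start with term (a). Since within each segment the items are sorted in decreasing order of attraction probability, for every $\ell\ge K+1$ we have $\Delta^i_{s_i(\ell),s_i(K)}\ge\Delta^i_{s_i(K+1),s_i(K)}\ge\Delta^{\textup{min}}_{\textup{opt}}$, so each $\widetilde{C}_i\le 12(L-K)\log T/\Delta^{\textup{min}}_{\textup{opt}}+\pi^2 L/3$. Summing over the $N$ segments gives $\sum_i\widetilde{C}_i=\mathcal{O}\!\big(N(L-K)\log T/\Delta^{\textup{min}}_{\textup{opt}}+NL\big)$. Term (b) is immediate, $Tp=\sqrt{NLT\log T}$, and term (d) collapses to $3NTL/T=3NL$ under $\delta=1/T$, a lower-order contribution.

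The substantive computation is term (c). First I would establish $\beta(T,1/T)=\mathcal{O}(\log T)$: with $\delta=1/T$ the argument of $\mathcal{G}$ in~\eqref{eq:thre} is $\tfrac12\log(3T^{5/2})=\Theta(\log T)$, and since $\mathcal{G}(x)=\Theta(x)$ for large $x$ by its definition in~\citet{kaufmann2018mixture}, the leading term of $\beta$ is $\Theta(\log T)$ while $6\log(1+\log T)$ is of smaller order. Then, using $\Delta^i_{\textup{change}}\ge\Delta^{\textup{min}}_{\textup{change}}$ together with $1/p=\sqrt{T/(NL\log T)}$, each summand obeys $d_i=\mathcal{O}\!\big((\Delta^{\textup{min}}_{\textup{change}})^{-2}\sqrt{LT\log T/N}+\sqrt{LT/(N\log T)}+1\big)$. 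Summing the $N-1$ terms and invoking $N\sqrt{LT\log T/N}=\sqrt{NLT\log T}$ yields term (c) $=\mathcal{O}\!\big(\sqrt{NLT\log T}/(\Delta^{\textup{min}}_{\textup{change}})^2\big)$, where the $\sqrt{NLT/\log T}$ and $N$ remainders are absorbed.

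Finally I would assemble the orders. Because $\Delta^{\textup{min}}_{\textup{change}}\le 1$, term (c) dominates term (b); and under the mild regime $NL\le T\log T$ (consistent with Assumption~\ref{assump:segment} forcing long segments), the $NL$ and $N$ remainders coming from (a), (c), (d) are dominated by the two principal quantities. This leaves exactly $\mathcal{O}\!\big(N(L-K)\log T/\Delta^{\textup{min}}_{\textup{opt}}+\sqrt{NLT\log T}/(\Delta^{\textup{min}}_{\textup{change}})^2\big)$, as claimed. I expect the main obstacle to be the order estimate $\beta(T,1/T)=\mathcal{O}(\log T)$, which hinges on the precise growth of $\mathcal{G}$, and the accompanying verification that all lower-order terms are genuinely subsumed rather than merely assumed negligible.
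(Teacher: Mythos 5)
Your proposal is correct and follows essentially the same route as the paper's own proof: substitute $\delta=1/T$ and $p=\sqrt{NL\log T/T}$ into the four-term bound of Theorem~\ref{thm:1}, control $\beta(T,1/T)=\mathcal{O}(\log T)$ via the growth of $\mathcal{G}$ (the paper uses the explicit estimate $\mathcal{G}(x)\le x+4\log(1+x+\sqrt{2x})$ for $x\ge 5$, which is exactly your $\mathcal{G}(x)=\Theta(x)$ claim), deduce $d_i\lesssim L\log T/\bigl(p(\Delta^{\textup{min}}_{\textup{change}})^2\bigr)$, and absorb the lower-order terms. If anything, you are more explicit than the paper about bounding term (a) via $\Delta^i_{s_i(\ell),s_i(K)}\ge\Delta^{\textup{min}}_{\textup{opt}}$ and about why the $NL$ remainder is subsumed (your condition $NL\le T\log T$ is indeed implied by Assumption~\ref{assump:segment} with these parameter choices), steps the paper leaves implicit.
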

\begin{proof}
Please refer to Appendix~\ref{sec:proof_col1} for proof.
\end{proof}

As a direct result of Theorem~\ref{thm:1}, the upper bound on the regret of \texttt{GLRT-CascadeUCB} in Corollary~\ref{col:1} consists of two terms, where the first term is incurred by the UCB-based exploration and the second term is from the change-point detection component. As $T$ becomes larger, the regret is dominated by the cost of the change-point detection component, implying the regret is $\mathcal{O}(\sqrt{NLT\log{T}}/(\Delta^{\textup{min}}_{\textup{change}})^2)$. Similar phenomena can also be found in piecewise-stationary MAB~\citep{liu2018change,cao2019nearly,besson2019generalized}. 

The proof outline of Theorem~\ref{thm:1} is as following. We can decompose $\mathcal{R}(T)$ into good events that \texttt{GLRT-CascadeUCB} reinitializes the algorithm correctly and quickly after all change-points and bad events that either large detection delays or false alarms happen. We first upper bound the regret of the stationary scenario and the detection delays of good events, respectively. It can be shown that with high probability, all change-points can be detected correctly and quickly, so that the regret incurred by bad events is rather small. By summing up all regrets from good events and bad events, an upper bound on the regret of \texttt{GLRT-CascadeUCB} is then developed. 

\subsection{Regret Upper Bound for \texttt{GLRT}-\texttt{CascadeKL}-\texttt{UCB}}

\label{sec:5.2}
This subsection deals with the upper bound on the $T$-step regret of \texttt{GLRT-CascadeKL-UCB}. 
\begin{theorem}
\label{thm:2}
Suppose that Assumptions~\ref{assump:ind} and~\ref{assump:segment} are satisfied, \textup{\texttt{GLRT-CascadeKL-UCB}} guarantees
\begin{align*}
    \mR(T)&\le \underbrace{T(N-1)(L+1)\delta}_{(a)}+\underbrace{Tp}_{(b)}+\underbrace{\sum_{i=1}^{N-1}d_i}_{(c)}+\underbrace{NK\log{\log{T}}+\sum_{i=0}^{N-1}\widetilde{D}_i}_{(d)},
\end{align*}
where $\widetilde{D}_i$ is a term depending on $\log{T}$ and the suboptimal gaps. Detailed expression can be found in~\eqref{eq:D_i} in the Appendix~\ref{sec:proof_thm_2}.
\end{theorem}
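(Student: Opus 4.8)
The plan is to mirror the decomposition behind Theorem~\ref{thm:1}, keeping the change-point--detection bookkeeping essentially unchanged and replacing only the stationary exploitation analysis by one tailored to the KL-UCB index in~\eqref{eq:KL-UCB}. First I would invoke Lemma~\ref{lem:cond_prob} to define the \emph{good event} $\mathcal{D}$ on which every change-point $\nu_i$ is detected inside the window $[\nu_i+1,\nu_i+d_i]$, so that \texttt{GLRT-CascadeKL-UCB} performs a clean global restart shortly after each change. Under Assumption~\ref{assump:segment} these windows are disjoint, so on $\mathcal{D}$ the horizon decomposes into $N$ intervals on each of which the per-item statistics $\hat{\mathbf{w}}(\ell)$ and $n_\ell$ are freshly reset and the learner effectively faces a stationary cascading bandit with attraction vector $\mathbf{w}^i$. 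On the complementary bad event I would bound each slot's regret trivially by $1$ and the number of slots by $T$, while the probability of a false alarm or a missed/delayed detection is controlled, per item and per change-point, by Lemma~\ref{lem:cond_prob}; summing this failure probability over the $N-1$ change-points and the $L$ items (plus one book-keeping term) yields term $(a)=T(N-1)(L+1)\delta$.

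Conditioned on $\mathcal{D}$, the remaining regret splits into three contributions that I would bound separately. The forced uniform-exploration slots from lines 4--5 of Algorithm~\ref{alg:bandit} occur a fraction $\approx p$ of the time and each costs at most unit regret, giving term $(b)=Tp$. The regret accrued during the detection delay following each of the $N-1$ change-points is at most the delay length, which Assumption~\ref{assump:segment} caps at $d_i$, producing term $(c)=\sum_{i=1}^{N-1}d_i$. The genuinely new work is term $(d)$, the cost of KL-UCB exploitation within a segment. Since the cascade feedback reveals item $a_{k,t}$ only when every higher-ranked item was not clicked, I would apply the standard cascading reduction that upper-bounds the per-slot regret by a sum over the positions at which a suboptimal item replaces an optimal one; this reduces the task to controlling, for each suboptimal item $\ell$ on segment $i$, the number of plays during which $\textup{UCB}_{\textup{KL}}(\ell)$ overshoots the boundary set by the exploration function $g(t-\tau)=\log(t-\tau)+3\log\log(t-\tau)$.

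The core estimate is then to show, in the spirit of the proof of Theorem~2 in~\citet{cappe2013kullback}, that on segment $i$ the number of suboptimal pulls of item $\ell$ concentrates around $g(T)/\mbox{KL}\bigl(\mathbf{w}^i(\ell),\mathbf{w}^i(s_i(K))\bigr)$, with the residual deviation terms absorbed into $\log\log T$ corrections. Summing $g(\cdot)$ over the suboptimal items produces, for each segment, the $\log T$--dependent quantity $\widetilde{D}_i$ recorded in~\eqref{eq:D_i}, while the $K$ optimal items across the $N$ segments contribute the aggregate correction $NK\log\log T$; together these give term $(d)$. I expect this stationary KL-UCB bound to be the main obstacle: unlike the Hoeffding/Chernoff argument available for \texttt{GLRT-CascadeUCB}, the KL-UCB index is defined \emph{implicitly} through a KL-divergence constraint, so controlling the number of suboptimal plays requires the self-normalized KL deviation inequalities of~\citet{cappe2013kullback} rather than a simple confidence width, and these inequalities must be carried through both the cascade feedback structure and the restart so that observations from distinct segments do not contaminate one another. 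Combining the good-event bound $(b)+(c)+(d)$ with the bad-event bound $(a)$ then yields the stated inequality.
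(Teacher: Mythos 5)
Your proposal follows essentially the same route as the paper's own proof: the same good-event decomposition via Lemma~\ref{lem:cond_prob} yielding terms $(a)$--$(c)$, and the same two-part treatment of the exploitation cost, with the optimal items' KL-UCB under-coverage controlled by the self-normalized deviation inequality of~\citet{cappe2013kullback} (giving $NK\log\log T$) and the suboptimal items handled by the stationary \texttt{CascadeKL-UCB} analysis of~\citet{kveton2015cascading} (giving $\sum_i \widetilde{D}_i$). The only cosmetic difference is that the paper makes the split explicit through the per-slot events $\mathcal{E}_{t,i}$ on which some optimal item's index falls below its mean, but this is exactly the mechanism you describe.
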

\begin{proof}
Please refer to Appendix~\ref{sec:proof_thm_2} for proof.
\end{proof}

Similarly, the upper bound on the regret of \texttt{GLRT-CascadeKL-UCB} in Theorem~\ref{thm:2} can be decomposed into four different terms, where (a) is incurred by the incorrect change-point detections, (b) is the cost of the uniform exploration, (c) is incurred by the change-point detection delay, and (d) is the cost of the KL-UCB based exploration. 

\begin{corollary}
\label{col:2}
 Choosing the same $\delta$ and $p$ as in Corollary~\ref{col:1}, \textup{\texttt{GLRT-CascadeKL-UCB}} has same order of regret upper bound as~\eqref{eq:u1}. 
\end{corollary}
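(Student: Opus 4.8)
The plan is to reuse, almost verbatim, the term-by-term substitution already performed in the proof of Corollary~\ref{col:1}, exploiting the fact that the four terms of Theorem~\ref{thm:2} are the KL-UCB analogues of the four terms of Theorem~\ref{thm:1}. Setting $\delta=1/T$ and $p=\sqrt{NL\log T/T}$, I would first dispatch the two terms whose treatment is identical (up to constants) to that in Corollary~\ref{col:1}. Term (b), namely $Tp=\sqrt{NLT\log T}$, is unchanged. For term (c), $\sum_{i=1}^{N-1}d_i$, I would note that $\delta=1/T$ gives $\beta(T,\delta)=2\mathcal{G}(\log(3T^{5/2})/2)+6\log(1+\log T)=\mathcal{O}(\log T)$; substituting this and the chosen $p$ into $d_i=\lceil 4L\beta(T,\delta)/(p(\Delta^i_\textup{change})^2)+L/p\rceil$ yields $d_i=\mathcal{O}((\Delta^i_\textup{change})^{-2}\sqrt{LT\log T/N}+\sqrt{LT/(N\log T)})$. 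Summing over the $N-1$ change-points and lower-bounding each gap by $\Delta^{\textup{min}}_{\textup{change}}$ produces the $\mathcal{O}(\sqrt{NLT\log T}/(\Delta^{\textup{min}}_{\textup{change}})^2)$ contribution, the second summand contributing only the lower-order $\mathcal{O}(\sqrt{NLT})$. These computations are exactly those already carried out for Corollary~\ref{col:1}.

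What remains is to show that the two terms specific to the KL-UCB variant, (a) and (d), do not exceed the orders appearing in~\eqref{eq:u1}. Term (a), $T(N-1)(L+1)\delta$, collapses to $(N-1)(L+1)=\mathcal{O}(NL)$ under $\delta=1/T$, which is dominated by the first summand of~\eqref{eq:u1}; the $NK\log\log T$ piece of term (d) is likewise lower order. The substantive work is bounding $\sum_{i=0}^{N-1}\widetilde{D}_i$. Here I would open the explicit expression~\eqref{eq:D_i} and argue that each $\widetilde{D}_i$ is, up to constants, a sum over suboptimal items of the form $\sum_{\ell=K+1}^{L}(\log T)/\Delta^i_{s_i(\ell),s_i(K)}$, i.e.\ it has the same shape as the UCB quantity $\widetilde{C}_i$ of Theorem~\ref{thm:1}. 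Lower-bounding every gap by $\Delta^{\textup{min}}_{\textup{opt}}$ and summing over the $N$ segments then gives $\mathcal{O}(N(L-K)\log T/\Delta^{\textup{min}}_{\textup{opt}})$, matching the first summand of~\eqref{eq:u1}. Collecting the four bounded terms reproduces~\eqref{eq:u1}.

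The main obstacle is precisely the order of $\widetilde{D}_i$. The KL-UCB confidence region in~\eqref{eq:KL-UCB} produces a KL divergence, not a squared gap, in the denominator, so the raw instance-dependent bound reads $\sum_\ell (\log T)/\mathrm{KL}(\mathbf{w}^i(s_i(\ell)),\mathbf{w}^i(s_i(K)))$. To reconcile this with the $1/\Delta^{\textup{min}}_{\textup{opt}}$ dependence claimed in~\eqref{eq:u1}, I would invoke the standard inequality $\mathrm{KL}(x,y)\ge 2(x-y)^2$ together with the per-item regret weighting by the gap, which converts $\Delta/\mathrm{KL}$ into $\mathcal{O}(1/\Delta)$ and thereby recovers the same $(\log T)/\Delta$ scaling as UCB. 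Verifying that the particular form of~\eqref{eq:D_i} genuinely reduces in this way, rather than leaving a residual $1/\Delta^2$, is the one step that cannot simply be copied from Corollary~\ref{col:1} and must be checked against the appendix definition; once confirmed, the corollary follows.
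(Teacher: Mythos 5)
Your proposal is correct and takes essentially the same route as the paper, whose entire proof of Corollary~\ref{col:2} is that it is ``similar to that of Corollary~\ref{col:1}'': substitute $\delta=1/T$ and $p=\sqrt{NL\log T/T}$ into Theorem~\ref{thm:2}, reuse the bound on $\sum_i d_i$, and observe that the KL-UCB-specific terms are of no larger order. Your one flagged concern also resolves exactly as you suggest: in~\eqref{eq:D_i} each summand carries the gap $\Delta^{i+1}_{s_{i+1}(\ell),s_{i+1}(K)}$ in the numerator, so Pinsker's inequality $\mathrm{KL}(x,y)\ge 2(x-y)^2$ turns $\Delta/\mathrm{KL}$ into at most $1/(2\Delta)$, leaving only the gap-dependent constant $\bigl(1+\log(1/\Delta^{\textup{min}}_{\textup{opt}})\bigr)$, which the $\mathcal{O}(\cdot)$ convention of Section~\ref{sec:5.3} (gaps treated as constants) absorbs, so no residual $1/\Delta^2$ term arises.
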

\begin{proof}
The proof is similar to that of Corollary~\ref{col:1}. 
\end{proof}

We sketch the proof for Theorem~\ref{thm:2} as follows, and the detailed proofs are presented in Appendix~\ref{sec:proof_KLUCB}. By defining the events $\mathcal{U}$ and $\mathcal{H}_T$ as the algorithm performing uniform exploration and the change-points can be detected correctly and quickly, we can first bound the cost of uniform exploration $\mathcal{U}$ and cost of incorrect and slow detection of change-points $\overline{\mathcal{H}}_T$. Then, we can divide the regret $\mR(T)$ into different piecewise-stationary segments. By bounding the cost of detection delays and the KL-UCB based exploration, the upper bound on regret is thus established.

\subsection{Minimax Regret Lower Bound}
\label{subsec:lb}
In this subsection, we derive a minimax regret lower bound for piecewise-stationary CB, which is tighter than $\Omega(\sqrt{T})$ proved in~\citet{li2019cascading}. The proof technique is significantly different from~\citet{li2019cascading}.   

\begin{theorem}
\label{thm:lb}
If $L\geq 3$ and $T\geq MN\frac{(L-1)^2}{L}$, then for any policy, the worst-case regret is at least $\Omega(\sqrt{NLT})$,
where $M=1/\log \tfrac{4}{3}$, and $\Omega(\cdot)$ notation hides a constant factor that is independent of $N$, $L$, and $T$.
\end{theorem}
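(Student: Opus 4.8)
The plan is to reduce the piecewise-stationary problem to $N$ essentially independent stationary instances and then lower bound each by an information-theoretic argument. First I would split $\mT$ into $N$ consecutive blocks of equal length $\tau=T/N$ and place the (unknown) change-points at the block boundaries. On each block I would draw a stationary cascading instance independently at random from a hard family, so that the feedback collected during earlier blocks is statistically independent of the instance governing the current block. Since the per-slot regret in~\eqref{eq:regret} is additive and, conditionally on the past, each block presents a fresh stationary cascading bandit, the Bayesian regret under this product prior splits as a sum $\sum_{i=1}^{N}\mR_i(\tau)$ of per-block regrets; as the worst-case regret dominates the Bayesian regret, it suffices to prove the single-block bound $\mR_i(\tau)=\Omega(\sqrt{L\tau})$. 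Summing over the $N$ blocks then gives $N\cdot\Omega(\sqrt{L\tau})=\Omega(\sqrt{NLT})$ via $\tau=T/N$.

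For the per-block bound I would use a ``needle in a haystack'' family indexed by a distinguished item $j\in\mL$: in instance $j$ item $j$ has attraction $p+\epsilon$ while the remaining $L-1$ items share a common attraction $p$, with the constants $p$ and $K$ fixed so that (i) the per-slot regret of any list omitting item $j$ is $\Theta(\epsilon)$, and (ii) a single Bernoulli observation of an item separates the two attraction values by a KL divergence of order $\epsilon^2$. Averaging over a uniform prior on $j$, the regret on instance $j$ is $\Theta(\epsilon)$ times the expected number of slots in which $j$ is left out of $\mathcal{A}_t$, so the averaged regret is $\Theta(\epsilon\tau)$ times the average probability that the policy fails to include the needle. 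I would bound this failure probability below by comparing instance $j$ against the needle-free base instance through the Bretagnolle--Huber inequality $\Prob_0(A)+\Prob_j(A^c)\ge\tfrac12\exp(-D_j)$, where $D_j$ is the KL divergence between the two laws of the full observation sequence. The chain rule gives $D_j\le\E_0[N_j]\cdot\text{KL}$ with $N_j$ the number of informative observations of item $j$ and $\sum_j\E_0[N_j]\le K\tau$; trading the exploitation cost $\epsilon\tau$ against this divergence constraint yields the balancing gap $\epsilon\asymp\sqrt{L/\tau}$, for which the averaged divergence stays within the budget $\log\tfrac43$ and $\tfrac12\exp(-D_j)\ge\tfrac38$ remains bounded away from zero. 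This is precisely where $M=1/\log\tfrac43$ enters: keeping this divergence budget while the gap $\epsilon\asymp\sqrt{L/\tau}$ is itself admissible (at most a constant, so that $p+\epsilon\le1$) requires $\tau\gtrsim(L-1)^2/L$, i.e. $T\ge MN(L-1)^2/L$, with the $L-1$ counting the suboptimal items and $L\ge3$ ensuring the haystack is nonempty.

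The step I expect to be the main obstacle is the censored nature of the cascade feedback, which forbids treating a slot as a single clean Bernoulli observation of a fixed item. When $a_{k,t}$ is clicked, the items after it in $\mathcal{A}_t$ are never revealed, so the per-slot likelihood ratio between instance $j$ and the base is a product over the random browsed prefix rather than a single factor. I would resolve this by writing the likelihood of the feedback $F_t$ explicitly under the cascade model and showing that the per-slot contribution to $D_j$ equals the single-observation KL of the item whose attraction differs, weighted by the probability that the cascade actually reaches it; choosing the common attraction $p$ so that every listed item is browsed with probability bounded below by a constant keeps this weight $\Omega(1)$, so that censoring costs only a factor absorbed into the ($N,L,T$-independent) constant and the $\Theta(\epsilon^2)$ per-observation scaling survives. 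Once this censoring-adjusted divergence decomposition is in place, the remaining ingredients---Bretagnolle--Huber, the chain rule, the convexity (or Cauchy--Schwarz) balancing, and the summation over the $N$ blocks---are routine.
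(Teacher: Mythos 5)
Your proposal is correct, and its skeleton coincides with the paper's: both partition the horizon into $N$ blocks of length about $T/N$, plant a ``needle in a haystack'' instance in each block (one item at $1/2+\epsilon$, the rest at $1/2$), reduce the regret to the expected number of slots in which the needle is excluded from $\mathcal{A}_t$, control the information about the needle's identity by a change-of-measure argument that respects the censored cascade feedback, and balance $\epsilon\asymp\sqrt{NL/T}$, with the admissibility constraint $\epsilon=O(1)$ producing exactly the hypothesis $T\ge MN(L-1)^2/L$. Where you genuinely differ is the information-theoretic engine and the prior. The paper follows the Auer et al.\ EXP3-style lemma: it bounds $\E_i^\ell[N_i^\ell]-\E_i^0[N_i^\ell]$ by total variation, then Pinsker plus the data processing inequality, and sums over items with Jensen; no failure events are needed because the regret decomposition already involves expected counts. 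You instead run Bretagnolle--Huber per item, with the chain rule and the pigeonhole budget $\sum_{\ell}\E_0[N_\ell]\le K\tau$ --- equally standard, and it delivers the same $\Omega(\sqrt{L\tau})$ per block; Bretagnolle--Huber buys cleaner constants and an explicit ``probability of missing the needle,'' while the paper's route works directly with the counts the regret is made of. Two remarks on your caveats. First, the step you flag as the main obstacle is a non-issue in the favorable direction: conditioned on $\mathcal{A}_t$, the feedback $F_t$ is a deterministic function of $\mathbf{Z}_t$, so the data processing inequality gives $D_{\textup{KL}}\bigl(P^0(F_t|\mathcal{A}_t)\,\|\,P^\ell(F_t|\mathcal{A}_t)\bigr)\le D_{\textup{KL}}\bigl(P^0(\mathbf{Z}_t|\mathcal{A}_t)\,\|\,P^\ell(\mathbf{Z}_t|\mathcal{A}_t)\bigr)$ in one line; censoring only shrinks the divergence, so your requirement that every listed item be browsed with probability $\Omega(1)$ is unnecessary. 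Second, your fully independent product prior can make consecutive blocks share the same best item, i.e., yield an instance with fewer than $N$ segments; the paper sidesteps this by drawing $\ell_i^*$ uniformly from $\mathcal{L}\setminus\{\ell_{i-1}^*\}$, which guarantees a genuine change-point at every block boundary while still permitting the per-block conditional analysis. This is cosmetic if the benchmark class is ``at most $N$ segments,'' but it is the one detail you should patch to match the theorem as stated.
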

\begin{proof}
Please refer to Appendix~\ref{sec: sup_pf_lb} for details.
\end{proof}

The high-level idea is constructing a randomized hard instance appropriate for the piecewise-stationary CB setting, in which per time slot there is only one item with highest click probability and the click probabilities of remaining items are the same. When the distribution change occurs, the best item changes uniformly at random. For this instance, in order to lower bound the regret, it suffices to upper bound the expected numbers of appearances of the optimal item in the list. We then apply a change of measure technique to upper bound this expectation. One key step is to apply the data processing inequality for KL divergence to upper bound the discrepancy of feedback $F_t$ under change of distribution. 

This lower bound is the first characterization involving $N$, $L$, and $T$. And it indicates our proposed algorithms are nearly order-optimal within a logarithm factor $\sqrt{\log T}$.

\subsection{Discussion}

\label{sec:5.3}
 Corollaries~\ref{col:1} and~\ref{col:2} reveal that by properly choosing the confidence level $\delta$ and the uniform exploration probability $p$, the regrets of \texttt{GLRT-CascadeUCB} and \texttt{GLRT-CascadeKL-UCB} can be upper bounded by 
\begin{equation*}
    \mathcal{R}(T)=\mathcal{O}\left(\sqrt{NLT\log{T}}\right),
\end{equation*}
where $\mathcal{O}(\cdot)$ notation hides the gap term $\Delta_{\textup{change}}^\textup{min}$ and the lower order term $N(L-K)\log{T}/\Delta^{\textup{min}}_{\textup{opt}}$. Note that compared to CUSUM in ~\citet{liu2018change} and CMSW in~\citet{cao2019nearly}, the tuning parameters are fewer and does not require the smallest magnitude among the change-points $\Delta_{\textup{change}}^\textup{min}$ as shown in Corollary~\ref{col:1}. Moreover, parameter $\delta$ and $p$ follow simple rules as shown in Corollary~\ref{col:1}, while complicated parameter tuning steps are required in CUSUM and CMSW. 

The upper bounds on the regret of \texttt{GLRT-CascadeUCB} and \texttt{GLRT-CascadeKL-UCB} are improved over state-of-the-art algorithms \texttt{CascadeDUCB} and \texttt{CascadeSWUCB} in~\citet{li2019cascading} either in the dependence on $L$ or both $L$ and $T$, as their upper bounds are $\mathcal{O}(L\sqrt{NT}\log{T})$ and $\mathcal{O}(L\sqrt{NT\log{T}})$, respectively. In real-world applications, both $L$ and $T$ can be huge, for example, $L$ and $T$ are in the millions in web search, which reveals the significance of the improved $L$ dependence in our bounds. Compared to recent works on piecewise-stationary MAB~\citep{besson2019generalized} and combinatorial MAB (CMAB)~\citep{zhou2020near} that adopt GLRT as the change-point detector, the problem setting considered herein is different. In MAB, only one selected item rather than a list of items is allowed per time slot. Notice that although CMAB~\citep{combes2015combinatorial,cesa2012combinatorial,chen2016combinatorial,wang2017improving} or non-stationary CMAB~\citep{zhou2020near,chen2020combinatorial} also allow a list of items, they have full feedback on all $K$ items under semi-bandit setting.

\section{Experiments}
\label{sec:Experiments} 
In this section, numerical experiments on both synthetic and real-world datasets are carried out to validate the effectiveness of proposed algorithms. Four baseline algorithms are chosen for comparison, where \texttt{CascadeUCB1}~\citep{kveton2015cascading} and \texttt{CascadeKL-UCB}~\citep{kveton2015cascading} are nearly optimal algorithms to handle stationary CB; while \texttt{CascadeDUCB}~\citep{li2019cascading} and \texttt{CascadeSWUCB}~\citep{li2019cascading} cope with piecewise-stationary CB through a passively adaptive manner. In addition, two oracle algorithms, \texttt{Oracle-CascadeUCB1} and \texttt{Oracle-CascadeKL-UCB}, that have access to change-point times are also selected for comparison. In particular, the oracle algorithms restart when a change-point occur. Based on the theoretical analysis by~\citet{li2019cascading}, we choose $\xi=0.5$, $\gamma=1-0.25/\sqrt{T}$  for \texttt{CascadeDUCB} and choose $\tau = 2\sqrt{T\log{T}}$ for \texttt{CascadeSWUCB}. For \texttt{GLRT-CascadeUCB} and \texttt{GLRT-CascadeKL-UCB}, we set $\delta = 1/T$ and $p=0.1\sqrt{N\log{T}/T}$.

\subsection{Synthetic Dataset}

In this experiment, let $L = 10$ and $K = 3$.
We consider a simulated piecewise-stationary environment setup as follows: i) the expected attractions of the top $K$ items remain constant over the whole time horizon; ii) in each even piecewise-stationary segment, three suboptimal items are chosen randomly and their expected attractions are set to be $0.9$; iii) in each odd piecewise-stationary segment, we reset the expected attractions to the initial state. In this experiment, we set the length of each piecewise-stationary segment to be $2500$ and choose $N=10$, which is a total of $25000$ steps. Figure~\ref{fig:exp1_tra} is a detailed depiction of the piecewise-stationary environment.

\begin{figure}[htb]
	\begin{center}
	\includegraphics[width=0.6 \linewidth]{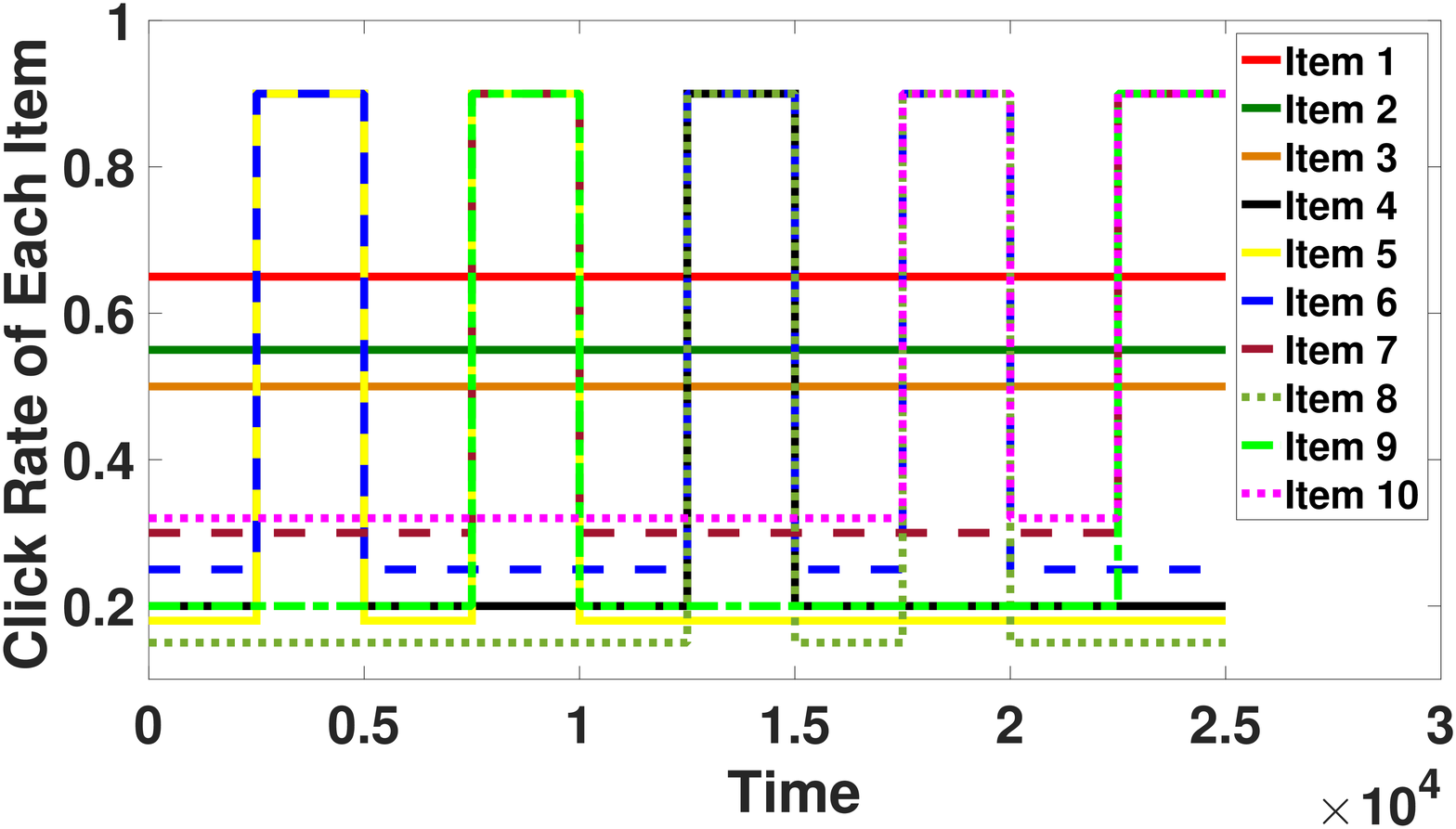}
	\end{center}
	\caption{ Click rate of each item of synthetic dataset with $T=25000$, $L=10$ and $N=10$.}
	\label{fig:exp1_tra}
\end{figure}

\begin{figure}[htb]
	\begin{center}
	\includegraphics[width=0.6 \linewidth]{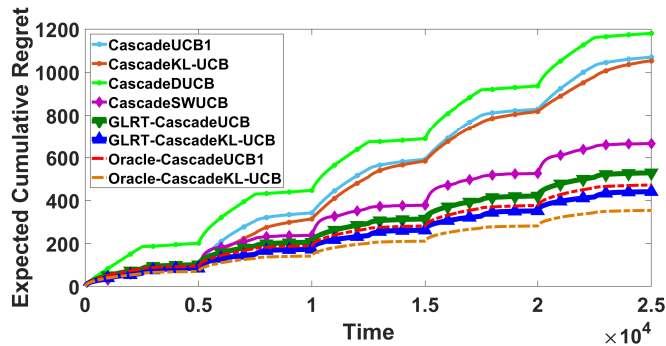}
	\end{center}
	\caption{Expected cumulative regrets of different algorithms on synthetic dataset.}
	\label{fig:exp1}
\end{figure}

Figure~\ref{fig:exp1} report the $T$-step cumulative regrets of all the algorithms by taking the average of the regrets over $100$ Monte Carlo simulations. Meanwhile, Table~\ref{table:std} lists the means and standard deviations of the $T$-step regrets of all algorithms on synthetic dataset  . The results show that the proposed \texttt{GLRT-CascadeUCB} and \texttt{GLRT-CascadeKL-UCB} achieve better performances than other algorithms and are very close to the oracle algorithms. Compared with the best existing algorithm (\texttt{CascadeSWUCB}), \texttt{GLRT-CascadeUCB} achieves a $20\%$ reduction of the cumulative regret and this fraction is $33\%$ for \texttt{GLRT-CascadeKL-UCB}, which is consistent with difference of empirical results between passively adaptive approach and actively adaptive approach in MAB. Notice that although \texttt{CascadeDUCB} seems to capture the change-points, the performance is even worse than algorithms designed for stationary CB. TThere are two possible reasons: i) The theoretical result shows that \texttt{CascadeDUCB} is worse than other algorithms for piecewise-stationary CB by a $\sqrt{\log{T}}$ factor; ii) the time horizon $\mT$ is not long enough. It is worth mentioning that our experiment on this synthetic dataset violates Assumption~\ref{assump:segment}, as it would require more than $10^5$ time slots for each piecewise-stationary segment. Surprisingly, the proposed algorithms are capable of detecting all the change-points correctly with high probability and sufficiently fast in our experiments, as shown in Table~\ref{tab:delay1}.

\subsection{Yahoo! Dataset}

\begin{figure}[htb]
	\begin{center}
	\includegraphics[width=0.6 \linewidth]{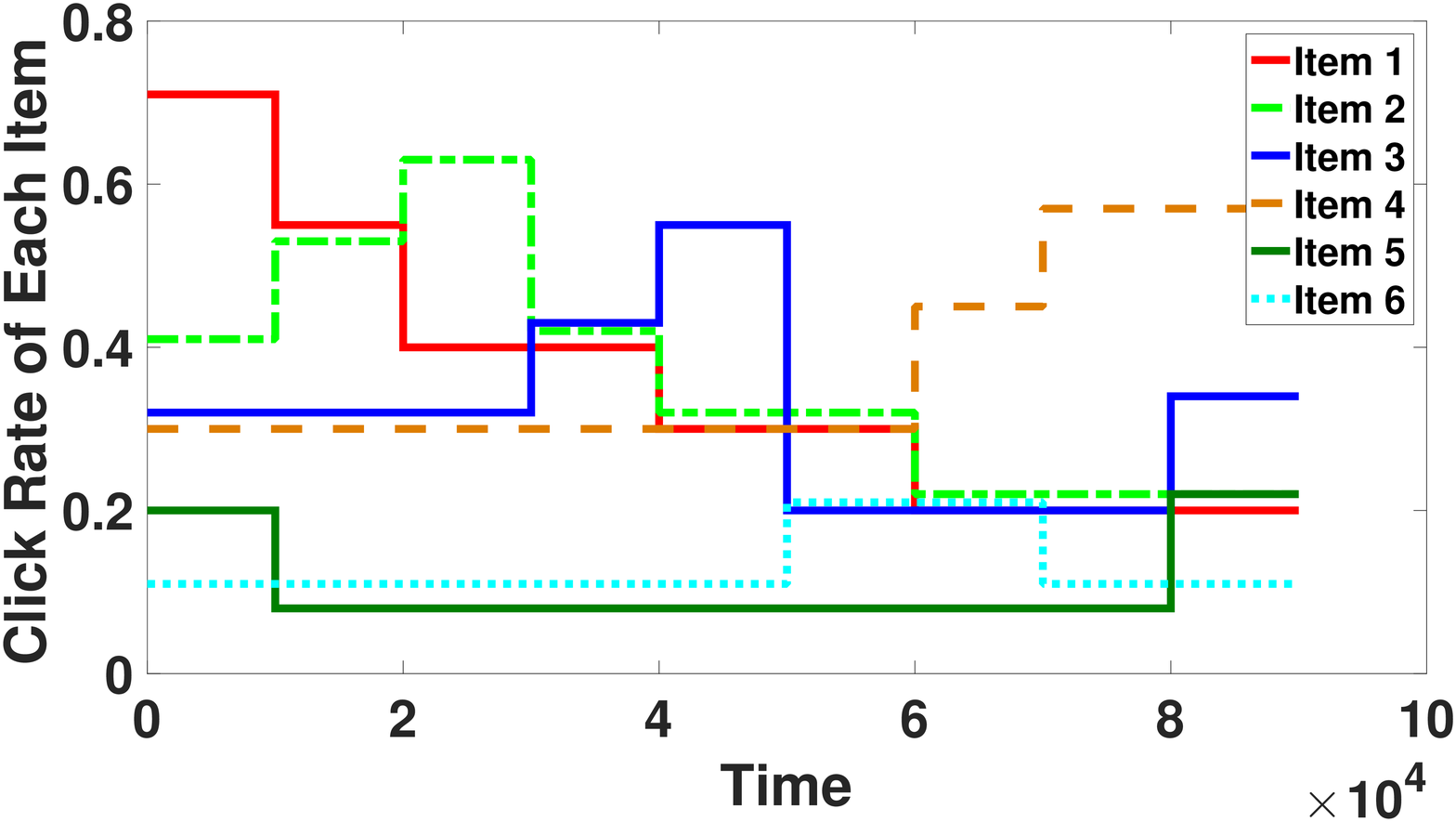}
	\end{center}
	\caption{ Click rate of each item of Yahoo! dataset with $T=90000$, $L=6$ and $N=9$.}
	\label{fig:exp2_tra}
\end{figure}

\begin{figure}[htb]
	\begin{center}
	\includegraphics[width=0.6 \linewidth]{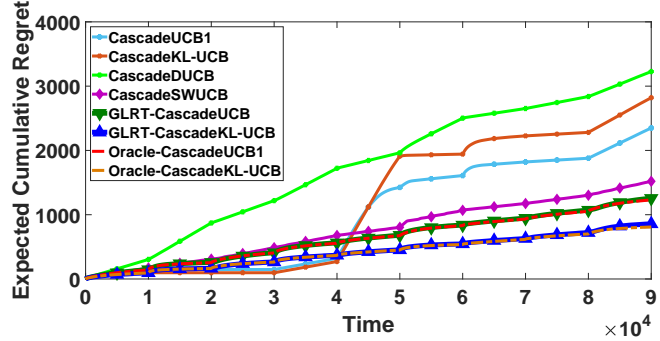}
	\end{center}
	\caption{Expected cumulative regrets of different algorithms on Yahoo! dataset.}
	\label{fig:exp2}
\end{figure}

In this subsection, we adopt the benchmark dataset for the evaluation of bandit algorithms published by Yahoo!\footnote{Yahoo! Front Page Today Module User Click Log Dataset on https://webscope.sandbox.yahoo.com}. This dataset, using binary values to indicate if there is a click or not, contains user click log for news articles displayed in the Featured Tab of the Today Module on Yahoo!~\citep{li2011unbiased}, where each item corresponds to one article. We pre-process the dataset by adopting the same method as~\citet{cao2019nearly}, where $L=6$, $K=2$ and $N=9$. To make the experiment nontrivial, several modifications are applied to the dataset: i) the click rate of each item is enlarged by $10$ times; ii) the time horizon is reduced to $T=90000$, which is shown in Figure~\ref{fig:exp2_tra}. Figure~\ref{fig:exp2} presents the cumulative regrets of all algorithms by averaging $100$ Monte Carlo trials, which shows  the regrets of our proposed algorithms are just slightly above the oracle algorithms and significantly outperform other algorithms. The reason that algorithms designed for stationarity perform better in the first $3$ segments is the optimal list does not change. Table~\ref{table:std} lists the means and standard deviations of the $T$-step regrets of all algorithms on Yahoo! dataset. Again, although the Assumption~\ref{assump:segment} is not satisfied in the Yahoo! dataset, GLRT based algorithms detect the change-points correctly and quickly and detailed mean detection time of each change-point with its standard deviation is in Table~\ref{tab:delay2}.

\begin{table*}[htb]
	\footnotesize
	\caption{Means and standard deviations of the $T$-step regrets.}
	\centering
	\begin{tabular}{|c||c|c|c|c|}
		\hline\hline
		& \texttt{CascadeUCB1}& \texttt{CascadeKL-UCB}& \texttt{CascadeDUCB}  & \texttt{CascadeSWUCB} \\ \hline
		Synthetic Dataset & $1069.77\pm 87.09$ & $1053.25\pm 111.67$ & $1180.30\pm 20.22$ & $664.84\pm 29.81$       \\ \hline
		Yahoo! Experiment & $2349.29\pm 312.71$ & $2820.16\pm 256.74$ & $3226.97\pm 39.37$ & $1519.56\pm 52.23$ \\
		\hline\hline
		& \texttt{GLRT-CascadeUCB}& \texttt{GLRT-CascadeKL-UCB}&\texttt{Oracle-CascadeUCB1}& \texttt{Oracle-CascadeKL-UCB}  \\
		\hline
		Synthetic Dataset & $\mathbf{527.93\pm 25.20}$ & $\mathbf{440.93\pm 45.54}$ & $472.25\pm 17.65$ & $353.86\pm 19.59$      \\ \hline
		Yahoo! Experiment & $\mathbf{1235.21\pm 54.59}$ & $\mathbf{856.77\pm 67.16}$ & $1230.17\pm 45.24$ & $808.84\pm 47.97$  \\
		\hline
	\end{tabular}
	\label{table:std}
\end{table*}
\begin{table*}[htbp!]
\footnotesize
\caption{Means and standard deviations of detection time $\tau_i$'s of change-points for synthetic dataset}
\centering
\begin{tabular}{|c||c|c|c|c|c|}
\hline\hline
Change-points & 2500 & 5000 & 7500 & 10000 & 12500       \\ 
\hline
\texttt{GLRT-CascadeUCB} & $2617.34\pm 102.49$ & $5022.83\pm 7.73$ & $7630.67\pm 124.99$ & $10025.62\pm 9.25$ & $12623.72\pm 110.34$ \\
\hline
\texttt{GLRT-CascadeKL-UCB} & $2722.82\pm 190.25$ & $5019.68\pm 6.88$ & $7721.75\pm 202.44$ & $10019.29\pm 5.76$ & $12707.02\pm 160.31$ \\
\hline\hline
Change-points & 15000 & 17500 & 20000 & 22500 & \\
\hline
\texttt{GLRT-CascadeUCB} & $15023.16\pm 7.83$ & $17614.26\pm 77.37$ & $20024.80\pm 7.66$ & $22607.32\pm 78.42$ &\\
\hline
\texttt{GLRT-CascadeKL-UCB} & $15019.29\pm 6.50$ & $17737.52\pm 173.15$ & $20021.05\pm 7.36$ & $22675.48\pm 157.98$ &\\
\hline
\end{tabular}
\label{tab:delay1}
\end{table*}

\begin{table*}[htb]
\footnotesize
\caption{Means and standard deviations of detection time $\tau_i$'s of change-points for Yahoo! dataset}
\centering
\begin{tabular}{|c||c|c|c|c|}
\hline\hline
Change-points & 10000 & 20000 & 30000 & 40000       \\ 
\hline
\texttt{GLRT-CascadeUCB} & $10378.89\pm 122.84$ & $20803.91\pm 387.20$ & $30249.65\pm 74.82$ & $40771.35\pm 257.24$  \\
\hline
\texttt{GLRT-CascadeKL-UCB} & $10362.85\pm 118.28$ & $20673.78\pm 338.82$ & $30224.11\pm 68.01$ & $40781.40\pm 253.39$  \\
\hline\hline
Change-points & 50000 & 60000 & 70000 & 80000  \\
\hline
\texttt{GLRT-CascadeUCB} & $50084.44\pm 20.82$ & $60474.27\pm 163.44$ & $70768.15\pm 244.04$ & $81490.03\pm 804.54$ \\
\hline
\texttt{GLRT-CascadeKL-UCB} & $50084.81\pm 22.44$ & $60484.09\pm 159.79$ & $70766.37\pm 246.76$ & $81785.40\pm 1285.24$ \\
\hline
\end{tabular}
\label{tab:delay2}
\end{table*}

\section{Conclusion}
\label{sec:Conclusion} 
Two new actively adaptive algorithms for piecewise-stationary cascading bandit, namely \texttt{GLRT-CascadeUCB} and \texttt{GLRT-CascadeKL-UCB} are developed in this work. It is analytically established that \texttt{GLRT-CascadeUCB} and \texttt{GLRT-CascadeKL-UCB} achieve the same nearly optimal regret upper bound on the order of $\mathcal{O}\left(\sqrt{NLT\log{T}}\right)$, which matches our minimax regret lower bound up to a $\sqrt{\log T}$ factor. Compared with state-of-the-art algorithms that adopt passively adaptive approach such as \texttt{CascadeSWUCB} and \texttt{CascadeDUCB}, our new regret upper bounds are reduced by $\mathcal{O}(\sqrt{L})$ and $\mathcal{O}(\sqrt{L\log{T}})$ respectively. Numerical tests on both synthetic and real-world data show the improved efficiency of the proposed algorithms. Several interesting questions are still left open for future work. One challenging problem lies in whether the $\sqrt{\log{T}}$ gap in time steps $T$ between regret upper bound and lower bound  can be closed. In addition, we are also interested in extending the single click models to multiple clicks models in future work.  
\bibliographystyle{apalike}
\bibliography{cas_ban_ref}

\begin{thebibliography}{}

\bibitem[Auer et~al., 2002a]{auer2002finite}
Auer, P., Cesa-Bianchi, N., and Fischer, P. (2002a).
\newblock Finite-time analysis of the multiarmed bandit problem.
\newblock {\em Mach. Learn.}, 47(2-3):235--256.

\bibitem[Auer et~al., 2002b]{auer2002nonstochastic}
Auer, P., Cesa-Bianchi, N., Freund, Y., and Schapire, R.~E. (2002b).
\newblock The nonstochastic multiarmed bandit problem.
\newblock {\em SIAM J. Comput.}, 32(1):48--77.

\bibitem[Auer et~al., 2019]{auer2019adaptively}
Auer, P., Gajane, P., and Ortner, R. (2019).
\newblock Adaptively tracking the best bandit arm with an unknown number of
  distribution changes.
\newblock In {\em Proc. 32nd Conf. on Learn. Theory (COLT'19)}, pages 138--158.

\bibitem[Besbes et~al., 2014]{besbes2014stochastic}
Besbes, O., Gur, Y., and Zeevi, A. (2014).
\newblock Stochastic multi-armed-bandit problem with non-stationary rewards.
\newblock In {\em Proc. 24th Annu. Conf. Neural Inf. Process. Syst.
  (NeurIPS'14)}, pages 199--207.

\bibitem[Besson and Kaufmann, 2019]{besson2019generalized}
Besson, L. and Kaufmann, E. (2019).
\newblock The generalized likelihood ratio test meets klucb: an improved
  algorithm for piece-wise non-stationary bandits.
\newblock {\em arXiv preprint arXiv:1902.01575}.

\bibitem[Cao et~al., 2019]{cao2019nearly}
Cao, Y., Wen, Z., Kveton, B., and Xie, Y. (2019).
\newblock Nearly optimal adaptive procedure with change detection for
  piecewise-stationary bandit.
\newblock In {\em Proc. 22nd Int. Conf. Artif. Intell. Stat. (AISTATS 2019)},
  pages 418--427.

\bibitem[Capp{\'e} et~al., 2013]{cappe2013kullback}
Capp{\'e}, O., Garivier, A., Maillard, O.-A., Munos, R., Stoltz, G., et~al.
  (2013).
\newblock Kullback--leibler upper confidence bounds for optimal sequential
  allocation.
\newblock {\em Ann. Stat.}, 41(3):1516--1541.

\bibitem[Cella and Cesa-Bianchi, 2019]{cella2019stochastic}
Cella, L. and Cesa-Bianchi, N. (2019).
\newblock Stochastic bandits with delay-dependent payoffs.
\newblock {\em arXiv preprint arXiv:1910.02757}.

\bibitem[Cesa-Bianchi and Lugosi, 2012]{cesa2012combinatorial}
Cesa-Bianchi, N. and Lugosi, G. (2012).
\newblock Combinatorial bandits.
\newblock {\em J. Comput. Syst. Sci.}, 78(5):1404--1422.

\bibitem[Chen et~al., 2020]{chen2020combinatorial}
Chen, W., Wang, L., Zhao, H., and Zheng, K. (2020).
\newblock Combinatorial semi-bandit in the non-stationary environment.
\newblock {\em arXiv preprint arXiv:2002.03580}.

\bibitem[Chen et~al., 2016]{chen2016combinatorial}
Chen, W., Wang, Y., Yuan, Y., and Wang, Q. (2016).
\newblock Combinatorial multi-armed bandit and its extension to
  probabilistically triggered arms.
\newblock {\em J. Mach. Learn. Res.}, 17(1):1746--1778.

\bibitem[Cheung et~al., 2019]{cheung2019thompson}
Cheung, W.~C., Tan, V., and Zhong, Z. (2019).
\newblock A thompson sampling algorithm for cascading bandits.
\newblock In {\em Proc. 22th Int. Conf. Artif. Intell. Stat. (AISTATS 2019)},
  pages 438--447.

\bibitem[Combes et~al., 2015]{combes2015combinatorial}
Combes, R., Shahi, M. S. T.~M., Proutiere, A., et~al. (2015).
\newblock Combinatorial bandits revisited.
\newblock In {\em Proc. 29th Annu. Conf. Neural Inf. Process. Syst.
  (NeurIPS'15)}, pages 2116--2124.

\bibitem[Craswell et~al., 2008]{craswell2008experimental}
Craswell, N., Zoeter, O., Taylor, M., and Ramsey, B. (2008).
\newblock An experimental comparison of click position-bias models.
\newblock In {\em Proc. 1st ACM Int. Conf. Web Search Data Min. (WSDM'08)},
  pages 87--94. ACM.

\bibitem[Draglia et~al., 1999]{draglia1999multihypothesis}
Draglia, V., Tartakovsky, A.~G., and Veeravalli, V.~V. (1999).
\newblock Multihypothesis sequential probability ratio tests. i. asymptotic
  optimality.
\newblock {\em {IEEE} Trans. Inf. Theory}, 45(7):2448--2461.

\bibitem[Dupret and Piwowarski, 2008]{dupret2008user}
Dupret, G.~E. and Piwowarski, B. (2008).
\newblock A user browsing model to predict search engine click data from past
  observations.
\newblock In {\em Proc. 31st Annu. Int. ACM SIGIR Conf. Res. Dev. Inf.
  Retrieval (SIGIR'08)}, pages 331--338. ACM.

\bibitem[Garivier and Moulines, 2011]{garivier2011upper}
Garivier, A. and Moulines, E. (2011).
\newblock On upper-confidence bound policies for switching bandit problems.
\newblock In {\em Proc. 22th Int. Conf. Algorithmic Learning Theory (ALT'11)},
  pages 174--188. Springer.

\bibitem[Hadjiliadis and Moustakides, 2006]{hadjiliadis2006optimal}
Hadjiliadis, O. and Moustakides, V. (2006).
\newblock Optimal and asymptotically optimal cusum rules for change point
  detection in the brownian motion model with multiple alternatives.
\newblock {\em Theory Probab. Appl.}, 50(1):75--85.

\bibitem[Hartland et~al., 2007]{hartland2007change}
Hartland, C., Baskiotis, N., Gelly, S., Sebag, M., and Teytaud, O. (2007).
\newblock Change point detection and meta-bandits for online learning in
  dynamic environments.
\newblock {\em CAp}, pages 237--250.

\bibitem[Hinkley, 1971]{hinkley1971inference}
Hinkley, D.~V. (1971).
\newblock Inference about the change-point from cumulative sum tests.
\newblock {\em Biometrika}, 58(3):509--523.

\bibitem[Jagerman et~al., 2019]{jagerman2019people}
Jagerman, R., Markov, I., and de~Rijke, M. (2019).
\newblock When people change their mind: Off-policy evaluation in
  non-stationary recommendation environments.
\newblock In {\em Proc. 12th ACM Int. Conf. Web Search Data Min. (WSDM'19)},
  pages 447--455. ACM.

\bibitem[Kaufmann and Koolen, 2018]{kaufmann2018mixture}
Kaufmann, E. and Koolen, W. (2018).
\newblock Mixture martingales revisited with applications to sequential tests
  and confidence intervals.
\newblock {\em arXiv preprint arXiv:1811.11419}.

\bibitem[Kocsis and Szepesv{\'a}ri, 2006]{kocsis2006discounted}
Kocsis, L. and Szepesv{\'a}ri, C. (2006).
\newblock Discounted ucb.
\newblock In {\em 2nd PASCAL Challenges Workshop}, volume~2.

\bibitem[Kveton et~al., 2015]{kveton2015cascading}
Kveton, B., Szepesvari, C., Wen, Z., and Ashkan, A. (2015).
\newblock Cascading bandits: Learning to rank in the cascade model.
\newblock In {\em Proc. 32th Int. Conf. Mach. Learn. (ICML 2015)}, pages
  767--776.

\bibitem[Lai and Robbins, 1985]{lai1985asymptotically}
Lai, T.~L. and Robbins, H. (1985).
\newblock Asymptotically efficient adaptive allocation rules.
\newblock {\em Adv. Appl. Math.}, 6(1):4--22.

\bibitem[Li et~al., 2019]{li2019bandit}
Li, B., Chen, T., and Giannakis, G.~B. (2019).
\newblock Bandit online learning with unknown delays.
\newblock In {\em Proc. 22th Int. Conf. Artif. Intell. Stat. (AISTATS 2019)},
  pages 993--1002.

\bibitem[Li and de~Rijke, 2019]{li2019cascading}
Li, C. and de~Rijke, M. (2019).
\newblock Cascading non-stationary bandits: Online learning to rank in the
  non-stationary cascade model.
\newblock In {\em Proc. 28th Int. Joint Conf. Artif. Intell. (IJCAI 2019)},
  pages 2859--2865.

\bibitem[Li et~al., 2011]{li2011unbiased}
Li, L., Chu, W., Langford, J., and Wang, X. (2011).
\newblock Unbiased offline evaluation of contextual-bandit-based news article
  recommendation algorithms.
\newblock In {\em Proc. 4th ACM Int. Conf. Web Search Data Min. (WSDM'11)},
  pages 297--306. ACM.

\bibitem[Li et~al., 2016]{li2016collaborative}
Li, S., Karatzoglou, A., and Gentile, C. (2016).
\newblock Collaborative filtering bandits.
\newblock In {\em Proc. 39th Annu. Int. ACM SIGIR Conf. Res. Dev. Inf.
  Retrieval (SIGIR'16)}, pages 539--548. ACM.

\bibitem[Littlestone and Warmuth, 1994]{littlestone1994weighted}
Littlestone, N. and Warmuth, M.~K. (1994).
\newblock The weighted majority algorithm.
\newblock {\em Inf. Comput.}, 108(2):212--261.

\bibitem[Liu et~al., 2018]{liu2018change}
Liu, F., Lee, J., and Shroff, N. (2018).
\newblock A change-detection based framework for piecewise-stationary
  multi-armed bandit problem.
\newblock In {\em Proc. 32nd AAAI Conf. Artif. Intell (AAAI'18).}

\bibitem[Lorden et~al., 1971]{lorden1971procedures}
Lorden, G. et~al. (1971).
\newblock Procedures for reacting to a change in distribution.
\newblock {\em Ann. Math. Stat.}, 42(6):1897--1908.

\bibitem[Lykouris et~al., 2018]{lykouris2018stochastic}
Lykouris, T., Mirrokni, V., and Paes~Leme, R. (2018).
\newblock Stochastic bandits robust to adversarial corruptions.
\newblock In {\em Proc. 50th Annu. ACM Symp. Theory Comput. (STOC'2018)}, pages
  114--122. ACM.

\bibitem[Moustakides et~al., 1986]{moustakides1986optimal}
Moustakides, G.~V. et~al. (1986).
\newblock Optimal stopping times for detecting changes in distributions.
\newblock {\em Ann. Stat.}, 14(4):1379--1387.

\bibitem[Page, 1954]{page1954continuous}
Page, E.~S. (1954).
\newblock Continuous inspection schemes.
\newblock {\em Biometrika}, 41(1/2):100--115.

\bibitem[Pereira et~al., 2018]{pereira2018analyzing}
Pereira, F.~S., Gama, J., de~Amo, S., and Oliveira, G.~M. (2018).
\newblock On analyzing user preference dynamics with temporal social networks.
\newblock {\em Mach. Learn.}, 107(11):1745--1773.

\bibitem[Siegmund, 2013]{siegmund2013sequential}
Siegmund, D. (2013).
\newblock {\em Sequential analysis: tests and confidence intervals}.
\newblock Springer Science \& Business Media.

\bibitem[Siegmund and Venkatraman, 1995]{siegmund1995using}
Siegmund, D. and Venkatraman, E. (1995).
\newblock Using the generalized likelihood ratio statistic for sequential
  detection of a change-point.
\newblock {\em Ann. Stat.}, pages 255--271.

\bibitem[Wang et~al., 2015]{wang2015incorporating}
Wang, C., Liu, Y., Wang, M., Zhou, K., Nie, J.-y., and Ma, S. (2015).
\newblock Incorporating non-sequential behavior into click models.
\newblock In {\em Proc. 38th Annu. Int. ACM SIGIR Conf. Res. Dev. Inf.
  Retrieval (SIGIR'15)}, pages 283--292. ACM.

\bibitem[Wang and Chen, 2017]{wang2017improving}
Wang, Q. and Chen, W. (2017).
\newblock Improving regret bounds for combinatorial semi-bandits with
  probabilistically triggered arms and its applications.
\newblock In {\em Advances in Neural Information Processing Systems}, pages
  1161--1171.

\bibitem[Wei and Luo, 2018]{wei2018more}
Wei, C.-Y. and Luo, H. (2018).
\newblock More adaptive algorithms for adversarial bandits.
\newblock In {\em Proc. 31st Conf. on Learn. Theory (COLT'18)}, pages
  1263--1291.

\bibitem[Wei and Srivatsva, 2018]{wei2018abruptly}
Wei, L. and Srivatsva, V. (2018).
\newblock On abruptly-changing and slowly-varying multiarmed bandit problems.
\newblock In {\em Proc. Am. Contr. Conf. (ACC 2018)}, pages 6291--6296. IEEE.

\bibitem[Whittle, 1988]{whittle1988restless}
Whittle, P. (1988).
\newblock Restless bandits: Activity allocation in a changing world.
\newblock {\em J. Appl. Probab.}, 25(A):287--298.

\bibitem[Willsky and Jones, 1976]{willsky1976generalized}
Willsky, A. and Jones, H. (1976).
\newblock A generalized likelihood ratio approach to the detection and
  estimation of jumps in linear systems.
\newblock {\em {IEEE} Trans. Autom. Control}, 21(1):108--112.

\bibitem[Yu and Mannor, 2009]{yu2009piecewise}
Yu, J.~Y. and Mannor, S. (2009).
\newblock Piecewise-stationary bandit problems with side observations.
\newblock In {\em Proc. 26th Int. Conf. Mach. Learn. (ICML 2009)}, pages
  1177--1184. ACM.

\bibitem[Yue et~al., 2010]{yue2010learning}
Yue, Y., Gao, Y., Chapelle, O., Zhang, Y., and Joachims, T. (2010).
\newblock Learning more powerful test statistics for click-based retrieval
  evaluation.
\newblock In {\em Proc. 33rd Annu. Int. ACM SIGIR Conf. Res. Dev. Inf.
  Retrieval (SIGIR'10)}, pages 507--514. ACM.

\bibitem[Zhou et~al., 2020]{zhou2020near}
Zhou, H., Wang, L., Varshney, L.~R., and Lim, E.-P. (2020).
\newblock A near-optimal change-detection based algorithm for
  piecewise-stationary combinatorial semi-bandits.
\newblock In {\em Proc. 34st AAAI Conf. Artif. Intell (AAAI'20).}

\bibitem[Zoghi et~al., 2017]{zoghi2017online}
Zoghi, M., Tunys, T., Ghavamzadeh, M., Kveton, B., Szepesvari, C., and Wen, Z.
  (2017).
\newblock Online learning to rank in stochastic click models.
\newblock In {\em Proc. 34th Int. Conf. Mach. Learn. (ICML 2017)}, pages
  4199--4208.

\end{thebibliography}
%

\newpage
\onecolumn
\appendix
\begin{center}
\textbf{\Large Appendices}
\end{center}

\section{Detailed Proofs of Theorem~\ref{thm:1}}
\label{sec:proof_UCB}
\subsection{Proofs of Auxiliary Lemmas}
\label{sec:proof_lemma}
In this subsection, we present auxiliary lemmas which are used to prove Theorem~\ref{thm:1}, as well as their proofs.\\
We start by upper bounding the regret under the stationary scenario with $N=1$, $\nu_0=0$, and $\nu_1=T$.
\begin{lemma}
\label{lem:stationary_bandit}
Under stationary scenario ($N=1$), the regret of \textup{\texttt{GLRT-CascadeUCB}} is upper bounded as
\begin{equation*}
    \mathcal{R}(T)\le T\mathbb{P}\left(\tau_1\le T\right)+pT+\widetilde{C}_1,
\end{equation*}
where $\tau_1$ is the first detection time.
\end{lemma}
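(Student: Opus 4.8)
The plan is to split the analysis according to whether the (false) detection time $\tau_1$ falls inside the horizon. Since there is no genuine change-point in the stationary scenario, any detection triggered by the GLRT is a false alarm; the idea is to charge the entire effect of such an alarm to the trivial per-slot bound, while on the complementary event the algorithm never reinitializes and reduces to a forced-exploration version of stationary CascadeUCB. Concretely, writing $R_t := R(\mathcal{A}_t,\mathbf{w},\mathbf{Z}_t)$, I would decompose
\[
\mathcal{R}(T)=\mathbb{E}\Big[\sum_{t=1}^T R_t\,\indfunc\{\tau_1\le T\}\Big]+\mathbb{E}\Big[\sum_{t=1}^T R_t\,\indfunc\{\tau_1> T\}\Big].
\]
Because the reward in \eqref{eq:reward} is binary, each instantaneous regret obeys $R_t\le 1$, so $\sum_{t=1}^T R_t\le T$ and the first term is at most $T\,\mathbb{P}(\tau_1\le T)$, supplying the first summand of the claim. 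It then remains to bound the second term on the event $\{\tau_1>T\}$, on which $\tau\equiv 0$ for the whole horizon and no restart ever occurs.

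On $\{\tau_1>T\}$ I would partition the slots into uniform-exploration slots (line~5) and UCB-based slots (line~7). With $\tau=0$, the exploration condition is $a=t\bmod\floor{L/p}\le L$, which holds for at most a $p$-fraction of the slots by construction of the $\bmod\floor{L/p}$ rule; bounding each such slot's regret by $1$ yields the $pT$ term. For the UCB-based slots, note that the counts $n_\ell$ and empirical means $\hat{\mathbf{w}}(\ell)$ accumulate from slot $1$ and the confidence radius in \eqref{eq:UCB1} uses the global elapsed time $t-\tau=t$, so the selection rule is exactly that of stationary CascadeUCB. I would therefore invoke the stationary CascadeUCB regret decomposition and counting argument of Kveton et al.\ (2015), which bounds the regret accrued on these slots by $\widetilde{C}_1=\sum_{\ell=K+1}^L \frac{12}{\Delta^1_{s_1(\ell),s_1(K)}}\log T+\frac{\pi^2}{3}L$. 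Summing the three contributions gives the stated inequality.

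The main obstacle is making this final step rigorous, since it departs from the textbook setting in two ways. First, the cascading feedback is partial: when the user clicks item $k$, items below $k$ are unobserved, so one cannot directly count selections of every suboptimal item. The standard remedy, which I would follow, is to charge regret only to the first observed suboptimal item ranked above an optimal one, combine this with the independence in Assumption~\ref{assump:ind}, and apply a Hoeffding/Chernoff tail bound to the UCB index to control the number of such ``bad'' comparisons. Second, the interleaved uniform exploration perturbs the sampling process relative to pure CascadeUCB. The key observation resolving this is that the extra samples collected during exploration only increase each $n_\ell$, hence shrink the confidence intervals, and therefore cannot increase the count of bad UCB selections; meanwhile the $\log(t-\tau)=\log t$ numerator in \eqref{eq:UCB1} is precisely what produces the $\log T$ dependence matching the stationary bound. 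Assembling the false-alarm term, the exploration term, and $\widetilde{C}_1$ then yields $\mathcal{R}(T)\le T\,\mathbb{P}(\tau_1\le T)+pT+\widetilde{C}_1$.
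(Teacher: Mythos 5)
Your proposal is correct and follows essentially the same route as the paper's proof: the same decomposition over $\{\tau_1\le T\}$ versus $\{\tau_1>T\}$ with $R_t\le 1$ giving the $T\,\mathbb{P}(\tau_1\le T)$ term, the same separation of uniform-exploration slots (charged $pT$) from UCB-based slots, and the same handling of the latter by splitting off the confidence-interval-failure event via Chernoff--Hoeffding (yielding the $\frac{\pi^2}{3}L$ piece of $\widetilde{C}_1$) and invoking the stationary \texttt{CascadeUCB} counting argument of Kveton et al.\ (2015) for the $\sum_{\ell=K+1}^L \frac{12}{\Delta^1_{s_1(\ell),s_1(K)}}\log T$ piece. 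Your observation that extra exploration samples only shrink confidence intervals, and hence do not break the stationary analysis, is a point the paper leaves implicit, but it does not change the argument.
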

\begin{proof}[Proof of Lemma \ref{lem:stationary_bandit}]
Denote as $R_t:=R\left(\mathcal{A}_t,\mathbf{w}_t,\mathbf{Z}_t\right)$ the regret of the learning algorithm at time slot $t$, where $\mathcal{A}_t$ is the recommended list at time slot $t$ and $\mathbf{w}_t$ is the associated expected attraction vector at time slot $t$. By further denoting as $\tau_1$ the first change-point detection time of the Bernoulli GLRT, the regret of \texttt{GLRT-CascadeUCB} can be decomposed as:
\begin{align*}
    \mathcal{R}(T)=\mathbb{E}\left[\sum_{t=1}^T R_t\mathbb{I}\{\tau_1\le T\}\right]+\mathbb{E}\left[\sum_{t=1}^T R_t\mathbb{I}\{\tau_1> T\}\right]
    \overset{(a)}{\le} T\mathbb{P}\left(\tau_1\le T\right)+\underbrace{\mathbb{E}\left[\sum_{t=1}^T R_t\mathbb{I}\{\tau_1> T\}\right]}_{(b)},
\end{align*}
where inequality (a) holds due to the fact that $R_t\le 1$ and $\mathbb{E}\left[\mathbb{I}\{\tau_1\le T\}\right]=\mathbb{P}\left(\tau_1\le T\right)$. 

In order to bound the term (b), we denote the event $\mathcal{U}$ as the algorithm being in the forced uniform exploration phase and let $\mathcal{E}_t:=\{\exists\ell\in\mL~\mbox{s.t.}~|\mathbf{w}^1(\ell)-\hat{\mathbf{w}}_t(\ell)|\ge\sqrt{3\log{t}/(2n_{\ell,t})}\}$ be the event that $\hat{\mathbf{w}}_t(\ell)$ is not in the high-probability confidence interval around $\mathbf{w}^1(\ell)$, where $\mathbf{w}^1(\ell)$ is expected attraction of item $\ell$ in the first piecewise-stationary segment, $\hat{\mathbf{w}}_t(\ell)$ is the sample mean of item $\ell$ up to time slot $t$, and $n_{\ell,t}$ is the number of times that item $\ell$ is observed up to time slot $t$. Term (b) can be further decomposed as
\begin{align*}
    \mathbb{E}\left[\sum_{t=1}^T R_t\mathbb{I}\{\tau_1> T\}\right]&=\mathbb{E}\left[\sum_{t=1}^TR_t\mathbb{I}\{\mathcal{U}\}\right]+\mathbb{E}\left[\sum_{t=1}^T R_t\mathbb{I}\{\tau_1> T,\mathcal{E}_{t-1},\overline{\mathcal{U}}\}\right]+\mathbb{E}\left[\sum_{t=1}^T R_t\mathbb{I}\{\tau_1> T,\overline{\mathcal{E}}_{t-1},\overline{\mathcal{U}}\}\right]\\
    &\overset{(c)}{\le} Tp+\underbrace{\mathbb{E}\left[\sum_{t=1}^T R_t\mathbb{I}\{\tau_1> T,\mathcal{E}_{t-1},\overline{\mathcal{U}}\}\right]}_{(d)}+\underbrace{\mathbb{E}\left[\sum_{t=1}^T R_t\mathbb{I}\{\tau_1> T,\overline{\mathcal{E}}_{t-1},\overline{\mathcal{U}}\}\right]}_{(e)},
\end{align*}
where inequality (c) is because of the fact that $R_t\le 1$ and the uniform exploration probability is $p$. Term (d) can be bounded by applying the Chernoff-Hoeffding inequality,
\begin{align*}
   \mathbb{E}\left[\sum_{t=1}^T R_t\mathbb{I}\{\tau_1> T,\mathcal{E}_{t-1},\overline{\mathcal{U}}\}\right] \le\sum_{\ell=1}^L\sum_{t=1}^T\sum_{n_\ell=1}^t\Prob\left(|\mathbf{w}^1(\ell)-\hat{\mathbf{w}}_t(\ell)|\ge\sqrt{3\log{t}/(2n_{\ell})}\right)
   \le2\sum_{\ell=1}^L\sum_{t=1}^T\sum_{n_\ell=1}^te^{-3\log{t}}\le\frac{\pi^2L}{3}.
\end{align*}
Furthermore, term (e) can be bounded as follows,
\begin{align*}
    \mathbb{E}\left[\sum_{t=1}^T R_t\mathbb{I}\{\tau_1> T,\overline{\mathcal{E}}_{t-1},\overline{\mathcal{U}}\}\right]&\overset{(f)}{\le}pT+\sum_{\ell=K+1}^L\frac{12}{\Delta^1_{s_1(\ell),s_1(K)}}\log{T},
\end{align*}
where the inequality $(f)$ follows the proof of Theorem 2 in~\citet{kveton2015cascading}. By summing all terms, we prove the result.
\end{proof}
Then we bound the false alarm probability $\mathbb{P}\left(\tau_1\le T\right)$ in Lemma~\ref{lem:stationary_bandit} under previously mentioned stationary scenario.
\begin{lemma}
\label{lem:false_alarm}
Consider the stationary scenario, with confidence level $\delta\in (0,1)$ for the Bernoulli GLRT, and we have that
\begin{equation*}
\mathbb{P}\left(\tau_1\le T\right)\le L\delta.   
\end{equation*}
\end{lemma}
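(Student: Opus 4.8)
The plan is to reduce the false-alarm event to a per-item event and then control each one through the design of the GLRT threshold. Under the stationary scenario there is a single segment with attraction vector $\mathbf{w}^1$, so for every item $\ell$ the observations $X_{\ell,1},X_{\ell,2},\ldots$ fed to Algorithm~\ref{alg:cd} are i.i.d. $\textup{Bern}(\mathbf{w}^1(\ell))$; in particular the null hypothesis $\mathcal{H}_0$ holds for each item's stream. Since the detector runs \emph{separately} on each item and a global restart is triggered the moment \emph{any} item's GLRT returns True, the first detection time satisfies $\{\tau_1\le T\}=\bigcup_{\ell=1}^{L}\mathcal{F}_\ell$, where $\mathcal{F}_\ell$ denotes the event that item $\ell$'s GLRT fires before slot $T$. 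A union bound then gives $\Prob(\tau_1\le T)\le\sum_{\ell=1}^{L}\Prob(\mathcal{F}_\ell)$, so it suffices to prove the per-item bound $\Prob(\mathcal{F}_\ell)\le\delta$.

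First I would write $\mathcal{F}_\ell$ explicitly. Letting $n$ index the number of observations collected for item $\ell$, we have $\mathcal{F}_\ell=\{\exists\, n\le T,\ \exists\, s\in[1,n-1]:\ s\,\mbox{KL}(\hat{\mu}_{1:s},\hat{\mu}_{1:n})+(n-s)\,\mbox{KL}(\hat{\mu}_{s+1:n},\hat{\mu}_{1:n})\ge\beta(n,\delta)\}$, where the inner statistic is exactly the quantity maximized in the definition of $\mbox{GLR}(n)$ in~\eqref{eq:GLR_STAT}. Thus bounding $\Prob(\mathcal{F}_\ell)$ amounts to a time-uniform deviation bound for the doubly-indexed (over the split location $s$ and the sample size $n$) two-segment Bernoulli GLR statistic under i.i.d. sampling.

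The key step is to recognize that this is precisely the quantity controlled by the concentration result underlying the threshold~\eqref{eq:thre}. For a \emph{fixed} split $s$, the statistic decomposes into two independent half-statistics, each comparing an empirical mean ($\hat{\mu}_{1:s}$ or $\hat{\mu}_{s+1:n}$) to the pooled empirical mean $\hat{\mu}_{1:n}$; the deviation inequality of~\citet{kaufmann2018mixture}, expressed through the function $\mathcal{G}(\cdot)$ in~\eqref{eq:thre}, bounds uniformly over all sample sizes the probability that such a half-statistic exceeds its $\mathcal{G}$-level. In $\beta$, the factor $2\mathcal{G}(\cdot)$ splits the error budget across the two half-samples, the argument $\log(3t\sqrt{t}/\delta)$ absorbs the union over the split location $s\in[1,n-1]$, and the additive $6\log(1+\log t)$ term accounts for the peeling over the sample count $n$. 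Combining these yields $\Prob(\mathcal{F}_\ell)\le\delta$, and summing over the $L$ items gives $\Prob(\tau_1\le T)\le L\delta$.

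The main obstacle is this last concentration step: faithfully invoking the mixture-martingale / self-normalized deviation bound of~\citet{kaufmann2018mixture} and verifying that the precise form of $\beta(n,\delta)$ in~\eqref{eq:thre} correctly budgets the error across the three sources of multiplicity — the two segments of each split, the location $s$, and the sample size $n$. Once the single-item guarantee $\Prob(\mathcal{F}_\ell)\le\delta$ is established, the passage to the multi-item setting is immediate via the union bound (no independence across items is required), so the entire difficulty resides in the per-stream uniform concentration inequality.
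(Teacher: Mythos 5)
Your skeleton matches the paper's proof up to the crux: you reduce the false alarm to per-item events exactly as the paper does (writing $\tau_1=\min_{\ell\in\mathcal{L}}\tau_{\ell,1}$ and union bounding over the $L$ items, with no independence across items required), and you correctly locate where the multiplicity over the split location $s$ is paid for (the $3t\sqrt{t}$ inside $\beta$ yields a per-$s$ budget of $\delta/(3s^{3/2})$, which sums over $s$ to at most $\delta$ since $\zeta(3/2)<2.7$). The gap is in how you invoke the concentration result. You claim that for fixed $s$ the GLR statistic splits into two \emph{independent} half-statistics, each comparing an empirical mean to the pooled empirical mean $\hat\mu_{1:n}$, and that the deviation inequality behind $\mathcal{G}(\cdot)$ bounds each of them. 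Both parts of this claim fail: the two terms share the random center $\hat\mu_{1:n}$, which depends on all $n$ observations, so they are not independent; and the mixture-martingale bounds of \citet{kaufmann2018mixture} control self-normalized deviations of the form $n\,\mbox{KL}(\hat\mu_{1:n},\mu)$ around the \emph{true} mean $\mu$, not around a data-dependent comparison point. As written, the concentration step cannot be executed.

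The missing idea is the variational identity that the paper uses as its inequality (a): since the pooled empirical mean is the minimizer of the weighted sum of KL divergences to a common point,
\begin{align*}
s\,\mbox{KL}\left(\hat\mu_{1:s},\hat\mu_{1:n}\right)+(n-s)\,\mbox{KL}\left(\hat\mu_{s+1:n},\hat\mu_{1:n}\right)
&=\inf_{\lambda\in[0,1]}\left[s\,\mbox{KL}\left(\hat\mu_{1:s},\lambda\right)+(n-s)\,\mbox{KL}\left(\hat\mu_{s+1:n},\lambda\right)\right]\\
&\le s\,\mbox{KL}\left(\hat\mu_{1:s},\mathbf{w}^1(\ell)\right)+(n-s)\,\mbox{KL}\left(\hat\mu_{s+1:n},\mathbf{w}^1(\ell)\right),
\end{align*}
so the false-alarm event is contained in the event that the right-hand side exceeds $\beta(n,\delta)$. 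Only after this replacement do you have two statistics computed on disjoint blocks of i.i.d.\ data, each centered at the true parameter, to which the time-uniform deviation bound (Lemma 10 of \citet{besson2019generalized}, built on the $\mathcal{G}$-function of \citet{kaufmann2018mixture}) applies; that bound gives the $\delta/(3s^{3/2})$ per-split probability, and summing over $s$ yields $\mathbb{P}(\tau_{\ell,1}\le T)\le\delta$, hence $\mathbb{P}(\tau_1\le T)\le L\delta$.
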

\begin{proof}[Proof of Lemma \ref{lem:false_alarm}]
Define $\tau_{\ell,1}$ as the first change-point detection time of the $\ell$th item. Then, $\tau_1=\min_{\ell\in\mathcal{L}}\tau_{\ell,1}$. Since the global restart is adopted, by applying the union bound, we have that
\begin{equation*}
    \mathbb{P}\left(\tau_1\le T\right)\le\sum_{\ell=1}^L\mathbb{P}\left(\tau_{\ell,1}\le T\right).
\end{equation*}
Recall the GLR statistic defined in~\eqref{eq:GLR_STAT}, and plug it into $\mathbb{P}\left(\tau_{\ell,1}\le T\right)$, we have that
\begin{align*}
    \mathbb{P}\left(\tau_{\ell,1}\le\tau\right)&\le\mathbb{P}\left[\exists (s,n)\in\mathbb{N}^2,n\le n_\ell, s<n:s\mbox{KL}\left(\hat{\mu}^1_{\ell,1:s},\hat{\mu}^1_{\ell,1:n}\right)+(n-s)\mbox{KL}\left(\hat{\mu}^1_{\ell,s+1:n},\hat{\mu}^1_{\ell,1:n}\right)>\beta(n,\delta)\right]\\
    &\le\mathbb{P}\left[\exists (s,n)\in\mathbb{N}^2,n\le T, s<n:s\mbox{KL}\left(\hat{\mu}^1_{\ell,1:s},\hat{\mu}^1_{\ell,1:n}\right)+(n-s)\mbox{KL}\left(\hat{\mu}^1_{\ell,s+1:n},\hat{\mu}^1_{\ell,1:n}\right)>\beta(n,\delta)\right]\\
    &\overset{(a)}{\le}\mathbb{P}\left[\exists (s,n)\in\mathbb{N}^2,n\le T, s<n:s\mbox{KL}\left(\hat{\mu}^1_{\ell,1:s},\mathbf{w}^1(\ell)\right)+(n-s)\mbox{KL}\left(\hat{\mu}^1_{\ell,s+1:n},\mathbf{w}^1(\ell)\right)>\beta(n,\delta)\right]\\
    &\overset{(b)}{\le}\sum_{s=1}^T\mathbb{P}\left[\exists s<n:s\mbox{KL}\left(\hat{\mu}^1_{\ell,1:s},\mathbf{w}^1(\ell)\right)+(n-s)\mbox{KL}\left(\hat{\mu}^1_{\ell,s+1:n},\mathbf{w}^1(\ell)\right)>\beta(n,\delta)\right]\\
    &\le\sum_{s=1}^T\mathbb{P}\left[\exists r\in\mathbb{N}:s\mbox{KL}\left(\hat{\mu}^1_{\ell,s},\mathbf{w}^1(\ell)\right)+r\mbox{KL}\left(\hat{\mu}^1_{k,r},\mathbf{w}^1(\ell)\right)>\beta(s+r,\delta)\right]\\
    &\overset{(c)}{\le}\sum_{s=1}^T\frac{\delta}{3s^{3/2}}
    \overset{(d)}{\le}\sum_{s=1}^\infty\frac{\delta}{3s^{3/2}}
    \le\delta,
\end{align*}
where $\hat{\mu}_{\ell,s:s'}^1$ is the mean of the rewards generated from the distribution $f^1_{\ell}$ with expected reward $\mathbf{w}^1(\ell)$ from time slot $s$ to $s'$. Inequality (a) is because of the fact that
\begin{equation*}
    s\mbox{KL}\left(\hat{\mu}_{1:s},\hat{\mu}_{1:n}\right)+(n-s)\mbox{KL}\left(\hat{\mu}_{s+1:n},\hat{\mu}_{1:n}\right)=\inf_{\lambda\in[0,1]}\left[s\mbox{KL}\left(\hat{\mu}_{1:s},\lambda\right)+(n-s)\mbox{KL}\left(\hat{\mu}_{s+1:n},\lambda\right)\right];
\end{equation*}
inequality (b) is because of the union bound; inequality (c) is because of the Lemma 10 in~\citet{besson2019generalized}; and inequality (d) holds due to the Riemann zeta function $\zeta(x)$ and when $x=3/2$, $\zeta(3/2)< 2.7$. Thus, we conclude by $\mathbb{P}\left(\tau_1\le T\right)\le L\delta$.
\end{proof}
Next, we define the event $\mathcal{C}^{(i)}$ that all the change-points up to $i$th have been detected quickly and correctly:
\begin{equation}
\label{eq:C_event}
    \mathcal{C}^{(i)}=\left\{\forall j\le i,\tau_j\in\left\{\nu_j+1,\cdots,\nu_j+d_j\right\}\right\}.
\end{equation}
Lemma~\ref{lem:cond_prob} below shows $\mathcal{C}^{(i)}$ happens with high probability.
\begin{lemma}
\label{lem:cond_prob}
(Lemma 12 in~\citet{besson2019generalized}) When $\mathcal{C}^{(i-1)}$ holds, \textup{GLRT} with confidence level $\delta$ is capable of detecting the change point $\nu_i$ correctly and quickly with high probability, that is,
\begin{align*}
&\mathbb{P}\left[\tau_i\le\nu_i|\mC^{(i-1)}\right]\le L\delta,~\mbox{and}~\mathbb{P}\left[\tau_i\ge\nu_i+d_i|\mC^{(i-1)}\right]\le \delta,
\end{align*}
where $\tau_i$ is the detection time of $i$th change-point.
\end{lemma}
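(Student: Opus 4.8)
The plan is to treat the two inequalities separately, since the first is a \emph{false-alarm} bound (the detector must not fire before the true change-point $\nu_i$) and the second is a \emph{detection-delay} bound (once the change has occurred the detector must fire within $d_i$ slots). Throughout I would condition on $\mC^{(i-1)}$, which by definition places the previous restart at $\tau_{i-1}\in\{\nu_{i-1}+1,\ldots,\nu_{i-1}+d_{i-1}\}$; together with Assumption~\ref{assump:segment} this guarantees that the stretch $[\tau_{i-1}+1,\nu_i]$ lies entirely inside the $i$th stationary segment and is long enough to supply many samples of each item before the change.

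For the first inequality, I would observe that on the event $\{\tau_i\le\nu_i\}$ all observations collected since the last restart $\tau_{i-1}$ up to $\nu_i$ are i.i.d.\ draws from the (unchanged) segment-$i$ distributions. Hence a detection in this window is exactly a false alarm on stationary data, and I would reuse verbatim the argument of Lemma~\ref{lem:false_alarm}: a union bound over the $L$ items reduces the problem to a single-item GLRT false-alarm probability, which the threshold $\beta(n,\delta)$ in~\eqref{eq:thre} controls at level $\delta$ (via the variational identity used in step~(a) of that proof, a union bound over the split $s$, and the deviation bound Lemma~10 of~\citet{besson2019generalized} summed through $\zeta(3/2)<2.7$). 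This yields $\Prob[\tau_i\le\nu_i\mid\mC^{(i-1)}]\le L\delta$.

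For the second inequality, the idea is to exhibit one item on which the GLR statistic provably crosses the threshold quickly. Let $\ell^\star$ be the item realizing $\Delta^i_\textup{change}=\max_{\ell}|\mathbf{w}^{i+1}(\ell)-\mathbf{w}^i(\ell)|$. The forced uniform exploration places each item first once per cycle of length $\floor{L/p}$, so over any window of $d_i$ slots item $\ell^\star$ is observed at least $\tfrac{pd_i}{L}-1$ times; the additive $\tfrac{L}{p}$ inside $d_i$ absorbs this partial-cycle loss, while the leading term guarantees at least $\tfrac{4\beta(T,\delta)}{(\Delta^i_\textup{change})^2}$ post-change observations. I would then evaluate $\mbox{GLR}(n)$ at the split $s$ equal to the number of pre-change observations of $\ell^\star$: substituting the true means and applying Pinsker's inequality to each term, the sum of the two KL terms is bounded below by a quantity of order $\min(s,n-s)\,(\Delta^i_\textup{change})^2$, which exceeds $\beta(T,\delta)\ge\beta(n,\delta)$ once the smaller side-count reaches $\tfrac{\beta(T,\delta)}{(\Delta^i_\textup{change})^2}$ (the factor $4$ in $d_i$ leaving room for the concentration step). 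Finally a concentration argument — the empirical KL terms are close to their population values, controlled by the same deviation inequality of~\citet{besson2019generalized} — upgrades this deterministic ``population'' statement to the high-probability bound $\Prob[\tau_i\ge\nu_i+d_i\mid\mC^{(i-1)}]\le\delta$.

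The main obstacle is the delay bound: the cascade feedback reveals item $\ell^\star$ only when it is placed first and no earlier item is clicked, so the sampling guarantee hinges on the algorithm's design (line~5, which sets $a_{1,t}\leftarrow a$) rather than on a generic exploration count, and the concentration step must hold uniformly over all candidate split points $s$ in the supremum defining $\mbox{GLR}(n)$. Matching the constants $4$ and $\tfrac{L}{p}$ in $d_i$ to the threshold $\beta(T,\delta)$ is the delicate bookkeeping, but structurally the argument is the adaptation of Lemma~12 of~\citet{besson2019generalized} from a single Bernoulli stream to the $L$-item cascade, with the extra factor $L$ entering the false-alarm bound through the union over items.
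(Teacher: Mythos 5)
The paper gives no proof of this lemma at all: it is imported verbatim as Lemma~12 of \citet{besson2019generalized}, so the only internal content is the citation itself. Your reconstruction is sound and follows essentially the same route as that cited proof, correctly adapted to the cascade setting — false alarm by treating the post-restart, pre-change data as stationary (valid on $\mC^{(i-1)}$ under Assumption~\ref{assump:segment}) and union-bounding over the $L$ per-item detectors, whence the factor $L\delta$; delay by counting forced-exploration observations (first-position placement makes the explored item always observed, the additive $L/p$ in $d_i$ covering the partial cycle), lower-bounding the GLR statistic at the single split corresponding to the true change via Pinsker, and concentration that is uniform over the random sample counts — with only the constant bookkeeping tying $4$ and $L/p$ to $\beta(T,\delta)$ left unexecuted, which you explicitly flag; note, incidentally, that since the GLR is a supremum over splits, the concentration need not hold uniformly over splits as you suggest, only at that one split but uniformly in the number of observations.
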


In the next lemma, we bound the expected detection delay with the good event $\mC^{(i)}$ holds.
\begin{lemma}
\label{lem:bound_delay}
The expected delay given $\mC^{(i)}$ is: 
\begin{equation*}
\mathbb{E}\left[\tau_i-\nu_i|\mathcal{C}^{(i)}\right]\le d_i.
\end{equation*}
\end{lemma}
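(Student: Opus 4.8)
The plan is to observe that the statement is essentially immediate from the \emph{definition} of the good event $\mathcal{C}^{(i)}$ in~\eqref{eq:C_event}, rather than requiring any probabilistic estimate: conditioning on $\mathcal{C}^{(i)}$ already forces the detection time $\tau_i$ to land inside the window $\{\nu_i+1,\ldots,\nu_i+d_i\}$, so the delay is controlled deterministically on that event.

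First I would recall that, by~\eqref{eq:C_event}, the event $\mathcal{C}^{(i)}$ requires in particular $\tau_i \in \{\nu_i+1,\ldots,\nu_i+d_i\}$. Hence, on $\mathcal{C}^{(i)}$ we have the pointwise (sure) bounds $1 \le \tau_i - \nu_i \le d_i$, so $\tau_i - \nu_i$ is a positive integer-valued random variable whose conditional law is supported on $\{1,\ldots,d_i\}$.

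Second, to make the conclusion explicit I would write the conditional expectation through the tail-sum representation for a nonnegative integer random variable,
\begin{equation*}
\mathbb{E}\left[\tau_i - \nu_i \,\middle|\, \mathcal{C}^{(i)}\right] = \sum_{k=1}^{\infty} \mathbb{P}\left(\tau_i - \nu_i \ge k \,\middle|\, \mathcal{C}^{(i)}\right),
\end{equation*}
and note that each summand is at most $1$ and vanishes for $k > d_i$ (since $\tau_i - \nu_i \le d_i$ surely on $\mathcal{C}^{(i)}$). Summing the at most $d_i$ nonzero terms yields $\mathbb{E}[\tau_i - \nu_i \mid \mathcal{C}^{(i)}] \le d_i$. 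Equivalently, one may simply invoke monotonicity of the conditional expectation against the deterministic upper bound $d_i$, using that $d_i = d_i(p,\delta)$ is a constant and therefore passes through the conditional expectation unchanged.

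I do not expect a genuine obstacle here: the content of the lemma is bookkeeping that isolates the deterministic delay guarantee baked into $\mathcal{C}^{(i)}$. The one point to keep straight is that the bound is crucially a \emph{conditional} statement --- without conditioning on $\mathcal{C}^{(i)}$, detection can be arbitrarily late (or a false alarm can occur), and no such uniform bound holds. The substantive probabilistic work, namely that $\mathcal{C}^{(i)}$ itself occurs with high probability, is supplied separately by Lemma~\ref{lem:cond_prob} and is not needed for this lemma.
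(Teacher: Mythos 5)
Your proof is correct and takes essentially the same route as the paper: the paper's proof is a one-line observation that, by the definition of $\mathcal{C}^{(i)}$ in~\eqref{eq:C_event}, the delay $\tau_i-\nu_i$ is deterministically at most $d_i$ on that event, so the conditional expectation is bounded by $d_i$. Your tail-sum/monotonicity elaboration just makes this immediate observation explicit.
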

\begin{proof}
By the definition of $\mathcal{C}^{(i)}$, the conditional expected delay is obviously upper bounded by $d_i$.
\end{proof}
\subsection{Proof of Theorem~\ref{thm:1}}
\label{sec:proof_thm1}
\begin{proof}
Define good events $E_i=\{\tau_i>\nu_i\}$ and $D_i=\{\tau_i\le\nu_i+d_i\}$, $\forall 1\le i\le N-1$. Recall the definition of the good event $\mC^{(i)}$ that all the change-points up to $i$th one have been detected correctly and quickly in~\eqref{eq:C_event}, and we can find that $\mathcal{C}^{(i)}=E_1\cap D_1\cap\cdots \cap E_i\cap D_i$. Again, we denote $R_t:=R\left(\mathcal{A}_t,\mathbf{w}_t,\mathbf{Z}_t\right)$ as the regret of the learning algorithm at time slot $t$. By first decomposing the expected cumulative regret with respect to the event $E_1$, we have that
\begin{align*}
    \mathcal{R}(T)&=\E\left[\sum_{t=1}^T R_t\mathbb{I}{\{E_1\}}\right]+\E\left[\sum_{t=1}^T R_t\mathbb{I}{\{\overline{E_1}\}}\right]\le\E\left[\sum_{t=1}^T R_t\mathbb{I}{\{E_1\}}\right]+T\mathbb{P}(\overline{E}_1)\\
    &\overset{(a)}{\le}\E\left[\sum_{t=1}^{\nu_1}R_t\mathbb{I}{\{E_1\}}\right]+\E\left[\sum_{t=\nu_1+1}^TR_t\right]+TL\delta\overset{(b)}{\le}\widetilde{C}_1+\nu_1 p+\underbrace{\E\left[\sum_{t=\nu_1+1}^TR_t\right]}_{(c)}+TL\delta,
\end{align*}
where the inequality (a) is because that $\mathbb{P}(\overline{E}_1)$ can be bounded using Lemma~\ref{lem:false_alarm} and inequality (b) holds due to Lemma~\ref{lem:stationary_bandit}.
To bound the term (c), by applying the law of total expectation, we have that 
\begin{align*}
    \E\left[\sum_{t=\nu_1+1}^TR_t\right]&\le\E\left[\sum_{t=\nu_1+1}^TR_t\bigm|\mathcal{C}^{(1)}\right]+T(1-\mathbb{P}(E_1\cap D_1))=\E\left[\sum_{t=\nu_1+1}^TR_t\bigm|\mathcal{C}^{(1)}\right]+T(\mathbb{P}(\overline{E}_1\cup \overline{D}_1))\\
    &\le\underbrace{\E\left[\sum_{t=\nu_1+1}^TR_t\bigm|E_1\cap D_1\right]}_{(d)}+T(L+1)\delta,
\end{align*}
where $\mathbb{P}(\overline{E}_1\cup \overline{D}_1)$ is acquired by applying the union bound on the Lemma~\ref{lem:cond_prob}.
Then, we turn to the term (d), by further splitting the regret, 
\begin{align*}
    \E\left[\sum_{t=\nu_1+1}^TR_t\bigm|E_1\cap D_1\right]=\E\left[\sum_{t=\nu_1+1}^TR_t\bigm|\mathcal{C}^{(1)}\right]
    &\le\E\left[\sum_{t=\tau_1+1}^TR_t\bigm|\mathcal{C}^{(1)}\right]+\underbrace{\E\left[\sum_{t=\nu_1+1}^{\tau_1} R_t\bigm|\mC^{(1)}\right]}_{(e)}
    \\
    &\le\E\left[\sum_{t=\nu_1+1}^TR_t\bigm|\mathcal{C}^{(1)}\right]+d_1,
\end{align*}
where term (e) is bounded by applying the Lemma~\ref{lem:bound_delay} and the fact that $R_t\le 1$.
Thus,
\begin{equation*}
 \mathcal{R}(T)\le\E\left[\sum_{t=\nu_1+1}^TR_t\bigm|\mathcal{C}^{(1)}\right]+\widetilde{C}_1+\nu_1p+d_1+3TL\delta.   
\end{equation*}

Similarly,
\begin{align*}
  \E\left[\sum_{t=\nu_1+1}^TR_t\bigm|\mathcal{C}^{(1)}\right]&\le\E\left[\sum_{t=\nu_1+1}^TR_t\mathbb{I}{\{E_2\}}\bigm|\mathcal{C}^{(1)}\right]+T\mathbb{P}(\overline{E}_2|\mathcal{C}^{(1)})\\
  &\le\E\left[\sum_{t=\nu_1+1}^{\nu_2}R_t\mathbb{I}{\{E_2\}}\bigm|\mathcal{C}^{(1)}\right]+\E\left[\sum_{t=\nu_2+1}^TR_t\bigm|\mathcal{C}^{(1)}\right]+TL\delta\\
  &\le \widetilde{C}_2+(\nu_2-\nu_1)p+\underbrace{\E\left[\sum_{t=\nu_2+1}^TR_t\bigm|\mathcal{C}^{(1)}\right]}_{(f)}+TL\delta,
\end{align*}
where $\mathbb{P}(\overline{E}_2|\mathcal{C}^{(1)})$ directly follows Lemma~\ref{lem:cond_prob}. To bound term (f),
\begin{align*}
\E\left[\sum_{t=\nu_2+1}^TR_t\bigm|\mathcal{C}^{(1)}\right]&\le\E\left[\sum_{t=\nu_2+1}^TR_t\bigm|E_2\cap D_2\cap \mathcal{C}^{(1)}\right]+T(1-\mathbb{P}(E_2\cap D_2|\mathcal{C}^{(1)}))\\
&=\E\left[\sum_{t=\nu_2+1}^TR_t\bigm|E_2\cap D_2\cap \mathcal{C}^{(1)}\right]+T\mathbb{P}(\overline{E}_2\cup \overline{D}_2|\mathcal{C}^{(1)})\\
&\le \underbrace{\E\left[\sum_{t=\nu_2+1}^TR_t\bigm|\mathcal{C}^{(2)}\right]}_{(g)}+T(L+1)\delta,
\end{align*}
where $\mathbb{P}(\overline{E}_2\cup \overline{D}_2|\mathcal{C}^{(1)})$ is acquired by applying the union bound on Lemma~\ref{lem:cond_prob}.
For term (g), we have
\begin{align*}
    \E\left[\sum_{t=\nu_2+1}^TR_t\bigm|\mathcal{C}^{(2)}\right]\le\E\left[\sum_{t=\tau_2+1}^TR_t\bigm|\mathcal{C}^{(2)}\right]+\E\left[\sum_{t=\nu_2+1}^{\tau_2}R_t\bigm|\mathcal{C}^{(2)}\right]
     \le\E\left[\sum_{t=\nu_2+1}^TR_t\bigm|\mathcal{C}^{(2)}\right]+d_2.
\end{align*}
Wrapping up previous steps, we have that
\begin{equation*}
\mathcal{R}(T)\le\E\left[\sum_{t=\nu_2+1}^TR_t\bigm|\mathcal{C}^{(2)}\right]+\widetilde{C}_1+\widetilde{C}_2+\nu_2p+d_1+d_2+6TL\delta.    
\end{equation*}
 Recursively, the upper bound on the regret of \texttt{GLRT-CascadeUCB} is given by
\begin{equation*}
    \mathcal{R}(T)\le\sum_{i=1}^N \widetilde{C}_i+Tp+\sum_{i=1}^{N-1}d_i+3NTL\delta.
\end{equation*}
\end{proof}
\subsection{Proof of Corollary~\ref{col:1}}
\label{sec:proof_col1}
\begin{proof}
By applying the upper bound on $\mathcal{G}(x)$ that $\mathcal{G}(x)\le x+4\log(1+x+\sqrt{2x})$ if $x\ge 5$ to $d_i$, we have that
\begin{align*}
    d_i&\le\frac{4L}{p\left(\Delta^{\textup{min}}_{\textup{change}}\right)^2\beta(T,\delta)}+\frac{2L}{p}\\
    &\overset{(a)}{\le}\frac{4L}{p\left(\Delta^{\textup{min}}_{\textup{change}}\right)^2}\left[\log\left(\frac{3T^{3/2}}{\delta}\right)+8\log\left(1+\frac{\log(\frac{3T^{3/2}}{\delta})}{2}+\sqrt{\log\left(\frac{3T^{3/2}}{\delta}\right)}\right)+6\log(1+\log{T})\right]+\frac{2L}{p}\\
    &\overset{(b)}{\le}\frac{\frac{20L\log{T}+o(L\log{T})}{\left(\Delta^{\textup{min}}_{\textup{change}}\right)^2}+2L}{p}\lesssim\frac{L\log{T}}{p \left(\Delta^{\textup{min}}_{\textup{change}}\right)^2},
\end{align*}
where $(a) (b)$ hold when $\log(3T^{5/2})\ge 10$ (equals to $T\ge 36$). By plugging $d_i$ into Theorem~\ref{thm:1}, we have that,
\begin{equation*}
    \mathcal{R}(T)\lesssim\frac{N(L-K)\log{T}}{\Delta^{\textup{min}}_\textup{opt}}+Tp+\frac{NL\log{T}}{p \left(\Delta^{\textup{min}}_{\textup{change}}\right)^2}+3NL.
\end{equation*}
Combining the above analysis we conclude the corollary.
\end{proof}

\section{Detailed Proofs of Theorem~\ref{thm:2}}
\label{sec:proof_KLUCB}
\subsection{Proof of Theorem~\ref{thm:2}}
\label{sec:proof_thm_2}
\begin{proof}[Proof of Theorem~\ref{thm:2}]
We start by defining the good event $\mathcal{H}_T$ that all the change-points have been detected correctly and quickly,
\begin{align*}
   \mathcal{H}_T:=\{\forall i=1,\ldots,N-1,\tau_i\in\{\nu_i+1,\ldots,\nu_i+d_i\}\},
\end{align*}
And let $\mathcal{E}_{t,i}:=\{\exists\ell\in\{s_i(1),\ldots,s_i(K)\}~\mbox{s.t.}~\mathbf{w}^i(\ell)>\textup{UCB}_{\textup{KL},t}(\ell)\}$ be the event that the expected attraction of at least one optimal item is above the UCB index at time slot $t$ and $t$ is in $i$th piecewise-stationary segment, where $\textup{UCB}_{\textup{KL},t}(\ell)$ is the KL-UCB index of $\ell$ item computed at time slot $t$.
The regret of $\texttt{GLRT-CascadeKL-UCB}$ can be decomposed as
\begin{align*}
    \mR(T)&\le\mathbb{E}\left[\sum_{t=1}^T R_t\mathbb{I}\{\mathcal{U}\}\right]+\mathbb{E}\left[\sum_{t=1}^T R_t\mathbb{I}\{\overline{\mathcal{U}},\overline{\mathcal{H}}_T\}\right]+\mathbb{E}\left[\sum_{t=1}^T R_t\mathbb{I}\{\overline{\mathcal{U}},\mathcal{H}_T\}\right]\\
    & \le pT+\underbrace{T\Prob(\overline{\mathcal{H}}_T)}_{(a)}+\sum_{i=1}^{N-1}d_i+\underbrace{\mathbb{E}\left[\sum_{t=1}^{\nu_1} R_t\mathbb{I}\{\overline{\mathcal{U}},\mathcal{H}_T,\mathcal{E}_{t-1,1}\}\right]}_{(b)}+\sum_{i=1}^{N-1}\underbrace{\mathbb{E}\left[\sum_{t=\tau_i+1}^{\nu_{i+1}} R_t\mathbb{I}\{\overline{\mathcal{U}},\mathcal{H}_T,\mathcal{E}_{t-1,i+1}\}\right]}_{(c)}\\
    &\quad+\underbrace{\mathbb{E}\left[\sum_{t=1}^{\nu_1} R_t\mathbb{I}\{\overline{\mathcal{U}},\mathcal{H}_T,\overline{\mathcal{E}}_{t-1,1}\}\right]}_{(d)}+\sum_{i=1}^{N-1}\underbrace{\mathbb{E}\left[\sum_{t=\tau_i+1}^{\nu_{i+1}} R_t\mathbb{I}\{\overline{\mathcal{U}},\mathcal{H}_T,\overline{\mathcal{E}}_{t-1,i+1}\}\right]}_{(e)}.
\end{align*}

\textbf{Bound Term (a)}:
Recall the definition of $\mC^{(i)}$ and applying the union bound,
\begin{align*}
    \Prob(\overline{\mathcal{H}}_T)&\le\sum_{i=1}^{N-1}\Prob(\tau_i\notin\{\nu_i+1,\ldots,\nu_i+d_i\}|\mC^{(i-1)})\\
    &\le\sum_{i=1}^{N-1}\Prob(\tau_i\le \nu_i|\mC^{(i-1)})+\sum_{i=1}^{N-1}\Prob(\tau_i\ge \nu_i+d_i|\mC^{(i-1)})\\
    &\le(N-1)(L+1)\delta,
\end{align*}
where the last inequality is due to Lemma~\ref{lem:cond_prob}.

\textbf{Bound Terms (b) and (c)}: By plugging in the event $\mathcal{E}_{t,i}$, we have that
\begin{align*}
    \mathbb{E}\left[\sum_{t=\tau_i+1}^{\nu_{i+1}} R_t\mathbb{I}\{\overline{\mathcal{U}},\mathcal{H}_T,\mathcal{E}_{t-1,i+1}\}\right]&\le \sum_{\ell^*=1}^K\mathbb{E}\left[\mathbb{I}\{\mC^{(i)}\}\sum_{t=\tau_i+1}^{\nu_{i+1}}\mathbb{I}\{n_{s_t(\ell^*),t}\mbox{KL}(\hat{\mathbf{w}}_t(s_t(\ell^*)),\mathbf{w}_t(s_t(\ell^*)))\ge g(t-\tau_i)\}\right]\\
    &\le K\mathbb{E}\left[\sum_{t=\tau_i+1}^{\nu_{i+1}}\mathbb{I}\{n_{s_t(\ell^*),t}\mbox{KL}(\hat{\mathbf{w}}_t(s_t(\ell^*)),\mathbf{w}_t(s_t(\ell^*)))\ge g(t-\tau_i)\}|\mC^{(i)}\right]\\
   &= K\mathbb{E}\left[\sum_{t=\tau_i+1}^{\nu_{i+1}}\mathbb{I}\{n_{s_t(\ell^*),t}\mbox{KL}(\hat{\mathbf{w}}_t(s_t(\ell^*)),\mathbf{w}^i(s_t(\ell^*)))\ge g(t-\tau_i)\}|\mC^{(i)}\right]\\
   &\le K\sum_{t'=1}^{\nu_{i+1}-\tau_i}\Prob(\exists s\le t':s\mbox{KL}(\hat{\mu}_s,\mathbf{w}^i(s_t(\ell^*)))\ge g(t'))\\
   &\le K\sum_{t=1}^T\frac{1}{t\log{t}}\le K\log{\log{T}},
\end{align*}
where the first inequality is due to $\mathcal{H}_T\in\mC^{(i)}$; $\hat{\mathbf{w}}(\ell)$ is the mean of the rewards of item $\ell$ after the most recent detection time $\tau$ and up to time slot $t$; and the last inequality follows directly from Lemma 2 in~\citet{cappe2013kullback}. Note that (b) can be upper bounded similar to the procedures of bounding (c). 

\textbf{Bound Terms (d) and (e)}:
Here, according to the proof of Theorem 3 in~\citet{kveton2015cascading}, (d) and (e) can be bounded as
\begin{align*}
    &\mathbb{E}\left[\sum_{t=1}^{\nu_1} R_t\mathbb{I}\{\overline{\mathcal{U}},\mathcal{H}_T,\overline{\mathcal{E}}_{t-1,1}\}\right]~\mbox{or}~\mathbb{E}\left[\sum_{t=\tau_i+1}^{\nu_{i+1}} R_t\mathbb{I}\{\overline{\mathcal{U}},\mathcal{H}_T,\overline{\mathcal{E}}_{t-1,i+1}\}\right]\\
    &\quad\le\sum_{\ell=K+1}^L\frac{(1+\epsilon)\Delta^{i+1}_{s_{i+1}(\ell),s_{i+1}(K)}(1+\log(1/\Delta^{i+1}_{s_{i+1}(\ell),s_{i+1}(K)})}{\mbox{KL}(\mathbf{w}^{i+1}(s_{i+1}(\ell)),\mathbf{w}^{i+1}(s_{i+1}(K)))}(\log{T}+3\log{\log{T}})+\frac{C_2(\epsilon)}{d_i^{\beta(\epsilon)}},
\end{align*}
where $C_2(\epsilon)$ and $\beta(\epsilon)$ follow the same definition in~\citet{kveton2015cascading}. Denote $\widetilde{D}_i$ as
\begin{align}
\label{eq:D_i}
  \widetilde{D}_i=  \sum_{\ell=K+1}^L\frac{(1+\epsilon)\Delta^{i+1}_{s_{i+1}(\ell),s_{i+1}(K)}(1+\log(1/\Delta^{i+1}_{s_{i+1}(\ell),s_{i+1}(K)}))}{\mbox{KL}(\mathbf{w}^{i+1}(s_{i+1}(\ell)),\mathbf{w}^{i+1}(s_{i+1}(K)))}(\log{T}+3\log{\log{T}})+\frac{C_2(\epsilon)}{d_i^{\beta(\epsilon)}}.
\end{align}
Summing up all terms, and we have that
\begin{align*}
    \mR(T)\le T(N-1)(L+1)\delta+Tp+\sum_{i=1}^{N-1}d_i+KN\log{\log{T}}+\sum_{i=0}^{N-1}\widetilde{D}_i.
\end{align*}
\end{proof}

\section{Detailed Proofs of Theorem~\ref{thm:lb}}
\label{sec: sup_pf_lb}
\begin{proof}[Proof of Theorem \ref{thm:lb}]
The first step in deriving the minimax lower bound is to construct a randomized `hard instance' as follows. Partition the time horizon $T$ into $N$ blocks and name them $B_1, \ldots, B_N$, where the lengths of first $N-1$ blocks are $\lceil T/N\rceil$ and the length of the last block is $T-(N-1)\lceil T/N\rceil$. In each segment, $L-1$ items follow Bernoulli distribution with probability $1/2$ and only one item follows Bernoulli distribution with probability $1/2+\epsilon$, where $\epsilon$ is a small positive number. Let $\ell_i^*=\arg\max_{\ell\in\mathcal{L}}\mathbf{w}^{i}(\ell)$, i.e, the item with largest click probability during $B_i$. The distributions of the $\ell_i^*$'s are defined as follows:
\begin{itemize}
    \item $\ell_1^*\sim\text{Uniform}(\{1,\ldots, L\})$.
    \item for $i\geq 2$, $\ell_i^*\sim\text{Uniform}(L\setminus \ell_{i-1}^*)$.
\end{itemize}
Note that for this randomized instance, the regret for any policy $\pi$ is
\begin{align*}
    \mathcal{R}^{\pi}(T)=\epsilon(1/2)^{K-1}\mathbb{E}^{\pi}[\sum_{i=1}^{N}\sum_{t\in B_i}\indfunc\{\ell_i^*\not\in\mathcal{A}_t\}].
\end{align*}
The expectation is taken with respect to the policy $\pi$ and this randomized instance. From the above decomposition, we see that to lower bound the regret for any policy $\pi$, it suffices to upper bound $\mathbb{E}^{\pi}[\sum_{i=1}^{N}\sum_{t\in B_i}\indfunc\{\ell_i^*\in\mathcal{A}_t\}]$, the expectation of total number of recommendations to the item with largest click probability. Before we lower bound this quantity, we need some additional notation. Let $P_i^\ell$ be the joint distribution of $\{\mathcal{A}_t, F_t\}_{t\in B_i}$ given the policy $\pi$ and the $\ell$th item being the item with largest click probability, $P_i^0$ be the joint distribution of $\{\mathcal{A}_t, F_t\}_{t\in B_i}$ given the policy $\pi$ and every item follwing the Bernoulli distribution with probability $1/2$. Furthermore, let $\mathbb{E}_i^\ell[\cdot]$ and  $\mathbb{E}_i^0[\cdot]$ as their respective expectations. Let $N_i^\ell$ be the total numbers of appearances of item $\ell$ in the recommendation list during $B_i$. In order to lower bound the target expectation, we need the following lemma.
\begin{lemma}
\label{lemma:num}
For any segment $B_i$ and any $\ell\in\mathcal{L}$, we have
\begin{align*}
    \mathbb{E}_i^\ell[N_i^\ell]\leq \mathbb{E}_i^0[N_i^\ell]+\frac{|B_i|}{2}\sqrt{\E_i^0[N_i^\ell]\log(\frac{1}{1-4\epsilon^2})}.
\end{align*}
\end{lemma}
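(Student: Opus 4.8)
The plan is to run a change-of-measure argument resting on Pinsker's inequality and the chain rule for KL divergence, exactly as the proof sketch anticipates. First I would write $N_i^\ell=\sum_{t\in B_i}\indfunc\{\ell\in\mathcal{A}_t\}$, which takes values in $[0,|B_i|]$. Since $N_i^\ell$ is a bounded function of the observations $\{\mathcal{A}_t,F_t\}_{t\in B_i}$, the variational characterization of the total variation distance gives
\begin{equation*}
\E_i^\ell[N_i^\ell]-\E_i^0[N_i^\ell]\le |B_i|\cdot\mathrm{TV}(P_i^0,P_i^\ell),
\end{equation*}
and Pinsker's inequality $\mathrm{TV}(P_i^0,P_i^\ell)\le\sqrt{\tfrac12\,\mathrm{KL}(P_i^0,P_i^\ell)}$ then reduces the whole lemma to establishing $\mathrm{KL}(P_i^0,P_i^\ell)\le\tfrac12\,\E_i^0[N_i^\ell]\log\frac{1}{1-4\epsilon^2}$.

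Next I would expand $\mathrm{KL}(P_i^0,P_i^\ell)$ by the chain rule over the sequentially revealed pairs $\{\mathcal{A}_t,F_t\}_{t\in B_i}$. Because the policy $\pi$ is common to both instances, the conditional law of the recommended list $\mathcal{A}_t$ given the history $\mF_{t-1}$ is identical under $P_i^0$ and $P_i^\ell$, so those terms vanish and only the feedback terms survive:
\begin{equation*}
\mathrm{KL}(P_i^0,P_i^\ell)=\sum_{t\in B_i}\E_i^0\!\left[\mathrm{KL}\!\left(P_i^0(F_t\mid\mF_{t-1},\mathcal{A}_t),\,P_i^\ell(F_t\mid\mF_{t-1},\mathcal{A}_t)\right)\right].
\end{equation*}

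The crux is the per-slot bound. The feedback $F_t$ is a deterministic function of the attraction variables $\{Z_{a,t}\}_{a\in\mathcal{A}_t}$ of the displayed items, and the two instances differ only in the law of item $\ell$, namely $\mathrm{Bern}(1/2)$ under $P_i^0$ versus $\mathrm{Bern}(1/2+\epsilon)$ under $P_i^\ell$. By the independence across items (Assumption~\ref{assump:ind}), the KL divergence of the full displayed attraction vector equals $\indfunc\{\ell\in\mathcal{A}_t\}\cdot\mathrm{KL}(\mathrm{Bern}(1/2),\mathrm{Bern}(1/2+\epsilon))$, and the data processing inequality applied to the map sending that vector to $F_t$ shows the feedback divergence is no larger. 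Summing over $t\in B_i$ and recalling $\E_i^0[N_i^\ell]=\E_i^0[\sum_{t\in B_i}\indfunc\{\ell\in\mathcal{A}_t\}]$ gives $\mathrm{KL}(P_i^0,P_i^\ell)\le\E_i^0[N_i^\ell]\cdot\mathrm{KL}(\mathrm{Bern}(1/2),\mathrm{Bern}(1/2+\epsilon))$. A direct computation yields $\mathrm{KL}(\mathrm{Bern}(1/2),\mathrm{Bern}(1/2+\epsilon))=\tfrac12\log\frac{1}{1-4\epsilon^2}$, and substituting this back through Pinsker and the total variation bound produces exactly the constant $\tfrac{|B_i|}{2}$ in the statement.

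I expect the data processing step to be the main obstacle: one must argue rigorously that the censored cascading feedback --- which reveals whether items are attractive only up to and including the first click --- cannot carry more divergence than the full attraction vector, and that Assumption~\ref{assump:ind} collapses that vector divergence to a single-item Bernoulli KL. Everything else is routine, but the factor-of-two bookkeeping between Pinsker's $\tfrac12$ and the Bernoulli KL's $\tfrac12$ is what has to be tracked to land the exact coefficient $\tfrac{|B_i|}{2}$.
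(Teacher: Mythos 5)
Your proof is correct and follows essentially the same route as the paper's: boundedness of $N_i^\ell$ plus total variation, Pinsker's inequality, the chain-rule decomposition in which the policy terms cancel, the data processing inequality applied to the map from the displayed attraction variables to the feedback $F_t$, and the computation $\mathrm{KL}\left(\mathrm{Bern}(1/2),\mathrm{Bern}(1/2+\epsilon)\right)=\tfrac{1}{2}\log\tfrac{1}{1-4\epsilon^2}$. If anything, your write-up is slightly more careful than the paper's, since conditioning on the full history and restricting $\mathbf{Z}_t$ to the displayed items is precisely what makes the factor $\E_i^0[N_i^\ell]$ (rather than $|B_i|$) appear inside the square root.
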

\begin{proof}[Proof of Lemma~\ref{lemma:num}]
The proof is similar to Lemma A.1 in~\citet{auer2002nonstochastic}. The key difference is we apply the data processing inequality for KL divergence to upper bound the discrepancy of the partial feedback $F_t$'s under different distributions. 
\begin{align*}
    \mathbb{E}_i^\ell[N_i^l]-\mathbb{E}_i^0[N_i^\ell] &\overset{(a)}{\leq} \frac{|B_i|}{2}\left\lVert P_i^\ell - P_i^0\right\rVert_1\\
    &\overset{(b)}{\leq}\frac{|B_i|}{2}\sqrt{2D_{\textup{KL}}(P_i^0||P_i^\ell)}\\
    &=\frac{|B_i|}{2}\sqrt{2\sum_{t\in B_i}D_{\textup{KL}}(P_i^0(F_t|\mathcal{A}_t)||P_i^\ell(F_t|\mathcal{A}_t))}\\
    &\overset{(c)}{\leq}\frac{|B_i|}{2}\sqrt{2\sum_{t\in B_i}D_{\textup{KL}}(P_i^0(\mathbf{Z}_t|\mathcal{A}_t)||P_i^\ell(\mathbf{Z}_t|\mathcal{A}_t))}\\
    &=\frac{|B_i|}{2}\sqrt{\E_i^0[N_i^\ell]\log(\frac{1}{1-4\epsilon^2})},
\end{align*}
where $D_{\textup{KL}}(\cdot)$ is the KL divergence; $(a)$ is due to the boundedness of $N_i^l$; (b) is due to Pinsker's inequality; (c) is due to data processing inequality for KL divergence. 
\end{proof}
Apply Lemma~\ref{lemma:num} for $B_i$ and sum over all items, to get
\begin{align}
    \sum_{\ell\in\mathcal{L}}\E_i^\ell[N_i^\ell]&\leq \sum_{\ell\in\mathcal{L}}\E_i^0[N_i^\ell] +\sum_{\ell\in\mathcal{L}}\frac{|B_i|}{2}\sqrt{\E_i^0[N_i^\ell]\log(\frac{1}{1-4\epsilon^2})}\notag\\
    &\leq |B_i| + \frac{|B_i|}{2}\sqrt{|B_i|L\log(\frac{1}{1-4\epsilon^2})}\label{num_ub},
\end{align}
where the last inequality is due to $\sum_{\ell\in\mathcal{L}}\E_i^0[N_i^\ell]=|B_i|$ and Jensen's inequality. Then we are able to lower bound the regret for any policy $\pi$.
\begin{align*}
    \mathcal{R}^{\pi}(T)&=\epsilon(1/2)^{K-1}\left(T-\mathbb{E}^{\pi}[\sum_{i=1}^{N}\sum_{t\in B_i}\indfunc\{\ell_i^*\in\mathcal{A}_t\}]\right)\\
    &\overset{(a)}{\geq}(1/2)^{K-1}\epsilon\left(T-\tfrac{1}{L-1}(\sum_{i=1}^{N}|B_i|+\frac{|B_i|}{2}\sqrt{L|B_i|\log{\tfrac{1}{1-4\epsilon^2}}}\right)\\
                        &=(1/2)^{K-1}\left(\epsilon T - \tfrac{\epsilon T}{L-1} -\tfrac{\epsilon T}{2(L-1)}\sqrt{\frac{LT}{N}\log{\tfrac{1}{1-4\epsilon^2}}}\right)\\
                        &\overset{(b)}{\geq}(1/2)^{K-1}\left(\tfrac{\epsilon T}{2}-\tfrac{\epsilon^2 T}{K-1}\sqrt{\tfrac{LT}{N}\log{\tfrac{4}{3}}}\right).
\end{align*}
where $(a)$ is due to inequality~(\ref{num_ub}), and $(b)$ holds by $L\geq 3$, $4\epsilon^2\leq\frac{1}{4}$ and $\log{\frac{1}{1-x}}\leq 4\log(\frac{4}{3})x$ for all $x\in[0, \frac{1}{4}]$. Finally, setting $\epsilon=\frac{L-1}{4\sqrt{TL\log{(\frac{4}{3})}}}$ finishes the proof. 
\end{proof}

\end{document}